\def\endthebibliography{%
	\def\@noitemerr{\@latex@warning{Empty `thebibliography' environment}}%
	\endlist
}
\newtheorem{theorem}{Theorem}
\newtheorem{lemma}{Lemma}
\newtheorem{definition}{Definition}
\newtheorem{remark}{Remark}
\newcommand{\ad}{\mathrm{ad}}
\newcommand{\Ad}{\mathrm{Ad}}
\newcommand{\rvline}{\hspace*{-\arraycolsep}\vline\hspace*{-\arraycolsep}}
\begin{document}
	%
	\title{Closed-form Error Propagation on the $SE_n(3)$ Group for Invariant Extended Kalman Filtering with  Applications to VINS}

	\author{Xinghan Li, Haodong Jiang, Xingyu Chen, He Kong and Junfeng Wu 
	\thanks{ X. Li is with the College of Control Science and Engineering, and the State Key Laboratory of Industrial Control Technology, Zhejiang University, Hangzhou, P. R. China, Email: xinghanli0207@gmail.com. H. Jiang, X. Chen, and J. Wu are with the School of Data Science,  The Chinese University of Hong Kong, Shenzhen, Shenzhen, P. R. China, Emails:  haodongjiang@link.cuhk.edu.cn, xingyuchenmike@gmail.com,  junfengwu@cuhk.edu.cn. H. Kong is with the Department of Mechanical and Energy Engineering, Southern University of Science and Technology, Shenzhen, P. R. China, Email: kongh@sustech.edu.cn.
	}
}
	
	\maketitle
 	
	\begin{abstract}
	Pose estimation is important for robotic perception, path planning, etc. Robot poses can be modeled on matrix Lie groups and are usually estimated via filter-based methods. In this paper, we establish the closed-form formula for the error propagation for the Invariant extended Kalman filter (IEKF) in the presence of random noises and apply it to vision-aided inertial navigation. We evaluate our algorithm via numerical simulations and experiments on the OPENVINS platform.
	Both simulations and the experiments performed on the public EuRoC MAV datasets demonstrate that our algorithm outperforms some state-of-art filter-based methods such as the quaternion-based EKF, first estimates Jacobian EKF, etc.     
	\end{abstract}
	

	%
	\IEEEpeerreviewmaketitle

\section{Introduction}
	\IEEEPARstart{T}{he} pose estimation problem is highly nonlinear and draws significant research interest. Despite many excellent works based on advanced optimization algorithms~\cite{Leutenegger2015, Campos2021}, the extended Kalman filter (EKF) is recognized as the most widely used method with remarkable merits in terms of computational efficiency and real-time response\cite{Bloesch2017,Hartley2020, Geneva2020ICRA}. The EKF features linearization of the state-space model about the current state estimate, and inaccurate estimates would degrade the convergence~\cite{Krener2003TheCO} and consistency~\cite{Huang2007,Huang2008} of the EKF.
	
	Recently,  the development of invariant observer design harnesses symmetries  (invariance of a  dynamical system to some group actions) of nonlinear systems for improved estimation performance \cite{Bonnabel2008, Bonnabel2007}. In Barrau and Bonnabel's recent works\cite{Barrau,Barrau2018CDC}, they analyzed the convergence of the Invariant extended Kalman filtering (IEKF) for noise-free systems, investigating the log-linear property of the invariant error propagation, and revealing the optimality of IEKF. {They also proposed error propagation formulas in noisy system via approximating the matrix commutator operation. Later in~\cite{Martin2021,Yulin2022}, Martin et al. and Yulin et al. proposed the invariant error propagation in the discrete-time form on $SE_2(3)$ (a matrix Lie group to be discussed in Section~\ref{sec:preliminary}) via the approximation to the \textit{Baker–Campbell–Hausdorff} formula. 
	
		This paper studies the invariant error on the special Euclidean group and propagation of the logarithm of the invariant error in the Lie algebra. To the best of our knowledge, this is the first work that presents the closed-form formula of the invariant error propagation in noisy systems without any approximation.  Our main contributions are as follows: \begin{enumerate}
		\item [(i).] We establish the closed-form formula of the invariant error propagation for noisy systems without any approximation on the Lie algebra of $SE_n(3)$ and customize the results on $SE_2(3)$. These results provide a theoretical basis to analyze the uncertainty propagation. In addition, we elucidate that the only term that makes this stochastic system nonlinear in the Euclidean Space is the noise term. 
		\item [(ii).]  We demonstrate the usefulness of the closed-form formula in analyzing and modeling  uncertainty propagation in the Inertial Measurement Unit (IMU). First, in the presence of small invariant errors or small IMU bias, we reveal that the exact error propagation can be approximated by a linear stochastic system, which is consistent with the result in~\cite{Kanzhi2017} by linearization. Moreover, we propose an approximate but simple method of computing the error covariance matrix for propagation through the introduction of an additional random variable imitating the invariant error to compensate the Jacobian term.  
		\item [(iii).] We  customize the right invariant error result on the $SE_2(3)$ group to IMU and design two  filtering algorithms (with/without Jacobian compensation for the calculation of error covariance matrix) for state estimation of a mobile robot
		with the IMU in a realtime vision-aided inertial navigation system (VINS). We show that two filtering algorithms improve consistency via the observability analysis in~\cite{Huang2008}. The experimental results illustrate that the proposed filters  perform better than state-of-the-art filter-based methods. 
\end{enumerate}}

We organize the paper as follows. Section~\ref{sec:preliminary} revisits the selected preliminary of the Lie group. Section~\ref{sec:main result} established the closed-form expression of the invariant error propagation on the extension of the special Euclidean group. Section~\ref{sec:VINS} applies the results to IMU navigation models and presents filter design in the VINS. Section~\ref{sec:evaluation} reports the simulation and experiment results. Section~\ref{sec:conclusion and future work} concludes and envisions future work. Support lemmas and closely related definitions are given in the Appendix.

	\section{Geometry of Some Matrix Groups} \label{sec:preliminary}
{In this section, we review  Lie groups for robotics~\cite[Chap. 7]{barfoot_2017}.} 	The motion of a rigid body with respect to some reference frame can be described by an element of the special Euclidean group. It is comprised of a rotation
	matrix $\mathbf{R}\in\mathbb R^{3\times 3}$, and a translation vector
	$\mathbf t\in\mathbb R^3$ in the following form:
	\begin{align*}
		\begin{pmatrix}
			\mathbf{R}& \mathbf{t} \\
			\mathbf{0} &	1
		\end{pmatrix}.
	\end{align*}
	In this work we use $SO(3)$ to denote the set of rotation matrices, i.e., the special orthogonal group, and $SE(3)$ to denote the special Euclidean group. 
	The Lie algebra of $SE(3)$ is denoted by $\mathfrak{se}(3)$, and an element of it has the form
	\begin{equation}\label{eqn:se3}
		\begin{pmatrix}
			\omega^\wedge & \mathbf{v} \\
			\mathbf{0} & 				0
		\end{pmatrix},
	\end{equation}
	where $\omega^\wedge$ is the matrix representation of a vector $\omega\in\mathbb R^3$ in the Lie algebra of 
	$SO(3)$ using the ``hat'' operator\footnote{If not specifically in the $SO(3)$ group, the ``hat'' operation $(\cdot)^\wedge$ will denote a linear mapping that the Lie algebra $\mathfrak{g}$ can be identified into $\mathbb{R}^{\text{dim}(\mathfrak{g})}$.  }:
	\begin{align*}
		\omega^\wedge=\begin{pmatrix}
			0&-\omega_3&\omega_2\\
			\omega_3&0&-\omega_1\\
			-\omega_2&\omega_1&0
		\end{pmatrix}\in \mathfrak{so}(3).
	\end{align*}
	The representation of~\eqref{eqn:se3} is also expressed compactly as $(\omega,\mathbf{v})$ for ease of notation.
	
	The special orthogonal group and special Euclidean group are examples of matrix Lie groups. Consider a matrix Lie group $G$ and its Lie algebra $\mathfrak{g}$.
	The Lie bracket of  $\mathbf x$ and $\mathbf y$ in $\mathfrak{g}$ is given by the matrix commutator~$\left[ \mathbf{x},\mathbf{y}\right]=\mathbf{x}\mathbf{y}-\mathbf{y}\mathbf{x} $. In particular, the Lie bracket on $\mathfrak{so}(3)$ corresponds to the outer products of the two vectors generating the matrix elements of $\mathfrak{so}(3)$ via the ``hat'' operator, i.e., $\left[ \omega_1, \omega_2\right]=\omega_1\times\omega_2=(\omega_1^\wedge)\omega_2 $, and the Lie bracket on $\mathfrak{se}(3)$ is given by 
	\begin{align*}
		\left[ (\omega_1,\mathbf{v}_1),(\omega_2,\mathbf{v}_2)\right]=(\omega_1\times\omega_2,\omega_1\times\mathbf{v}_2-\omega_2\times\mathbf{v}_1) .
	\end{align*} 
	The adjoint action of $\mathbf{X}\in G$ on the Lie algebra
	$\mathbf{y}\in\mathfrak g$ is defined as $\Ad_\mathbf{X}:\mathfrak{g}\rightarrow\mathfrak{g},  \mathbf{y}\mapsto\mathbf{X}\mathbf{y}\mathbf{X}^{-1}$. In particular, for $\mathbf{X}:=(\mathbf{R},\mathbf{t})\in SE(3)$, the adjoint map on $\mathbf{y}:=(\omega,\mathbf{v})\in\mathfrak{se}(3)$ is given by 
	\begin{equation*}
		\Ad_\mathbf{X}(\mathbf{y})=\left(\mathbf{R}\omega,\mathbf{t}\times\mathbf{R}\omega+\mathbf{R}\mathbf{v} \right),
	\end{equation*}
	admitting a matrix representation
	\begin{equation*}
		\Ad_\mathbf{X}(\mathbf{y})=
		\begin{pmatrix}
			\mathbf{R}&\mathbf{0}\\
			\mathbf{t}^\wedge\mathbf{R}&\mathbf{R}
		\end{pmatrix}\begin{pmatrix}
			\mathbf{\omega}\\
			\mathbf{v}
		\end{pmatrix}.
	\end{equation*}
	The differential of the adjoint action $\Ad_{\mathbf X}$ at the identity element of $G$, denoted as $\ad_{\mathbf{x}}:\mathfrak{g}\rightarrow\mathfrak{g}$, is a linear mapping from $\mathfrak g$ to itself,
	which defines the adjoint action of  $ {\mathfrak {g}}$ on itself. Moreover, for $\mathbf{x},\mathbf{y}\in\mathfrak{g}$, it holds that $\ad_{\mathbf{x}}(\mathbf y)=\left[\mathbf{x}, \mathbf{y} \right]$. In particular, for $\mathbf{x}:=(\omega_1,\mathbf{v}_1)$ and $\mathbf{y}:=(\omega_2,\mathbf{v}_2)$ in $\mathfrak{se}(3)$,  
	\begin{align*}
		\ad_{\mathbf{x}}\left( \mathbf{y}\right) =(\omega_1\times\omega_2,\omega_1\times\mathbf{v}_2-\omega_2\times\mathbf{v}_1),
	\end{align*}
	which also admits a matrix representation
	\begin{equation*}
		\ad_{\mathbf{x}}\left( \mathbf{y}\right)=\begin{pmatrix}
			\omega_1^\wedge&\mathbf{0}\\
			\mathbf{v_1}^\wedge&\omega_1^\wedge
		\end{pmatrix}\begin{pmatrix}
			\mathbf{\omega_2}\\
			\mathbf{v_2}
		\end{pmatrix}.
	\end{equation*}
	The group and the Lie algebra are linked to one another via the exponential and logarithm operations. To be specific, the exponential map of a matrix, $\exp: \mathfrak{g}\to G$, allows us to wrap
	a Lie algebra element around the group. 
	The exponential map is locally invertible at some neighborhood of the identity of the Lie algebra, in which it is injective. We then can define the logarithm mapping $\log: G\to \mathfrak{g}$ as the inverse of the exponential mapping within such a neighborhood. It in turn acts as an unwrapping operation from the group to the Lie algebra. 
	
	At last, we introduce the matrix Lie group  $SE_n(3)$ as an extension of $SE(3)$, which is comprised of a rotation
	matrix in $\mathbb R^{3\times 3}$ and $n$ vectors in $\mathbb{R}^3$.
	The matrix representations of an element in $SE_n(3)$ and an element in its Lie algebra  $\mathfrak{se}_n(3)$ are as follows:
	\begin{align*}
		\begin{pmatrix}
			\begin{matrix}
				\mathbf{R}
			\end{matrix}
			& \rvline & \begin{matrix}
				\mathbf{t}_1&\cdots&\mathbf{t}_n
			\end{matrix} \\
			\hline
			\mathbf{0} & \rvline &
			\begin{matrix}
				\mathbf{I}
			\end{matrix}
		\end{pmatrix}, \quad 
		\begin{pmatrix}
			\omega^\wedge
			& \rvline & \begin{matrix}
				\mathbf{v}_1&\cdots&\mathbf{v}_n
			\end{matrix}\\
			\hline
			\mathbf{0} & \rvline &
			\begin{matrix}
				\mathbf{0}
			\end{matrix}
		\end{pmatrix}.
	\end{align*}
	The matrix representations of the adjoint action of $SE_n(3)$ on  $\mathfrak{se}_n(3)$  and the the adjoint action of $\mathfrak{se}_n(3)$ on itself are as follows:
	\begin{align}\label{eqn:sen3_matrix_form}
		\begin{pmatrix}
			\mathbf{R}&\mathbf{0}&\mathbf{0}&\mathbf{0}\\
			\mathbf{t}_1^\wedge\mathbf{R}&\mathbf{R}&\mathbf{0}&\mathbf{0}\\
			\vdots&\mathbf{0}&\ddots&\mathbf{0}\\
			\mathbf{t}_n^\wedge\mathbf{R}&\mathbf{0}&\mathbf{0}&\mathbf{R}
		\end{pmatrix}, \quad 
		\begin{pmatrix}
			\omega^\wedge&\mathbf{0}&\mathbf{0}&\mathbf{0}\\
			\mathbf{v}_1^\wedge&\omega^\wedge&\mathbf{0}&\mathbf{0}\\
			\vdots&\mathbf{0}&\ddots&\mathbf{0}\\
			\mathbf{v}_n^\wedge&\mathbf{0}&\mathbf{0}&\omega^\wedge
		\end{pmatrix}.
	\end{align}	
{
	Specifically, the group $SE_2(3)$ orignated from~\cite{Barrau}, shares the common structure of $SE_n(3)$, which can represent an orientation $\mathbf{R}$, position $\mathbf{p}$ and velocity $\mathbf{v}$ of a rigid body:
	\begin{equation*}
		SE_2(3):=\left\lbrace \mathbf{X}=\begin{pmatrix}
\mathbf{R}&\vline &\mathbf{v}&\mathbf{p}\\
\hline
\mathbf{0}_{2\times3}&\vline&\mathbf{I}_{2}
		\end{pmatrix}\vline \mathbf{R}\in SO(3),\mathbf{p},\mathbf{v}\in\mathbb{R}^{3} \right\rbrace .
	\end{equation*}}
	
\section{ Logarithmic Invariant Error Propagation }\label{sec:main result}
In this section, we present the closed-form error dynamics on $\mathfrak{se}_n(3)$ without any approximation and elucidate that the nonlinearity of error propagation dynamics is caused by the introduction of the noise to the system.
\subsection{Uncertainty Propagation in Dynamics}

Consider the following system on a matrix group $SE_n(3)$:
\begin{equation}\label{eqn:dynamic_sen(3)}
	\dot{\mathbf{X}}= \mathbf{X}\mathbf{v}_b+\mathbf{v}_g\mathbf{X}+f_0(\mathbf{X}), \hbox{~with~}\mathbf X(0)=\mathbf{X}_0,
\end{equation}
where $\mathbf{X} \in SE_n(3)$ is the system state, $\mathbf{v}_b,\mathbf{v}_g\in  \mathfrak{se}_n(3)$ are inputs, and $f_0$ is a vector filed, a smooth mapping from $SE_n(3)$ to its tangent bundle $T(SE_n(3))$.
In practice, $\mathbf{X}$ represents the configuration of a robot action space, $\mathbf v_b$ represents the input in the body frame, and $\mathbf v_g$ represents the input in the fixed frame.

We consider another dynamical system governed by the same vector field $f_0$, but differently initialized and steered by a noisy input $\mathbf v_{b}^{(n)}$, that is,
\begin{equation}\label{eqn:dynamics_bar_X}
	\dot{\bar{\mathbf{X}}}= \bar{\mathbf{X}}\mathbf{v}_{b}^{(n)}+\mathbf{v}_g\bar{\mathbf{X}}+f_0(\bar{\mathbf{X}}), \hbox{~with~} \bar{\mathbf{X}}(0)=\bar{\mathbf X}_0,
\end{equation}
$\mathbf{v}_{b}^{(n)}=\mathbf{v}_b+ \mathbf w$ where $\mathbf w^\vee\in \mathbb R^{3(n+1)}$ is assumed to be a white noise\footnote{The operation $(\cdot)^\vee$, called ``vee'' operation,  is the inverse of the ``hat'' operation, which is a mapping from $\mathfrak{se}_n(3)$ to $\mathbb{R}^{3(n+1)}$.}.
The trajectory  $\bar{\mathbf{X}}$ performs an open-loop tracking of ${\mathbf{X}}$. The input $\mathbf v_b^{(n)}$ is from body-fixed motion sensors, such as accelerates and perimeters, and $\mathbf w$ then captures the sensor noise. 

Following~\cite{Barrau}, we introduce two forms of errors between trajectories $\mathbf X$ and $\bar{\mathbf X}$ in the sense of group multiplication, which can be viewed as the group analog of linear errors in a vector space. 
\begin{definition}[Invariant Errors]
	The left (right) invariant error of the trajectories $\mathbf X$ and 
	$\hat {\mathbf X}$ is defined respectively as follows:
	\begin{align}\label{eqn:invariant_error}
		{\eta}_{L} &:= {\mathbf{X}}(t)^{-1}\bar{\mathbf{{X}}}(t) \quad \text{(left invariant error)},\\
		{\eta}_{R} &:= \bar{\mathbf{{X}}}(t)\mathbf{{X}}(t)^{-1} \quad \text{(right invariant error)}.
	\end{align}
\end{definition}

The paper~\cite{Barrau} studies a class of 
group affine dynamics generalizing linear systems, which are of the form
$$\dot{
	\mathbf X}=f_{u}(\mathbf X),
$$
where $f_{u}: SE_n(3)\to T(SE_n(3))$ satisfies the so-called group affine property
for any $u\in\mathcal U,~\mathbf{X}_1, \mathbf{X}_2\in SE_n(3)$,~ 
\begin{equation}\label{eqn:affine_linear_property}
	f_u(\mathbf{X}_1\mathbf{X}_2)=
	\mathbf{X}_1 f_u(\mathbf{X}_2)+
	f_u(\mathbf{X}_1)\mathbf{X}_2-\mathbf{X}_1f_u(I)
	\mathbf{X}_2.
\end{equation}
Different from~\eqref{eqn:affine_linear_property}, in this paper we require that $f_0(\cdot)$ satisfies  
\begin{equation}\label{eqn:f0_assumption}
	f_0(\mathbf{X}_1\mathbf{X}_2)=f_0(\mathbf{X}_1)\mathbf{X}_2+\mathbf{X}_1f_0(\mathbf{X}_2).
\end{equation}
One can verify that with $f_0(\cdot)$ satisfying~\eqref{eqn:f0_assumption},  the dynamics of the invariant errors becomes:
\begin{align}	\dot{{\eta}}_{L}&=-\mathbf{v}_b{\eta_{L}}+{\eta_{L}}\mathbf{v}_b+{\eta_{L}}\mathbf{w}+f_0({\eta_{L}}), \text{(left invariant)}\label{eqn:dynamic_left_invariant_error}\\
	\dot{{\eta}}_{R}&=\mathbf{v}_g{\eta_{R}}-{\eta_{R}}\mathbf{v}_g+\Ad_{\bar{\mathbf{X}}}\mathbf{{w}}{\eta_{R}}+f_0({\eta_{R}}), \text{(right invariant)}\label{eqn:dynamic_right_invariant_error}.
\end{align}
Note that the dynamics of~\eqref{eqn:dynamic_sen(3)} satisfies the group affine property~\eqref{eqn:affine_linear_property} and the left-invariant dynamics~\eqref{eqn:dynamic_left_invariant_error} and the right-invariant dynamics~\eqref{eqn:dynamic_right_invariant_error} satisfy~\eqref{eqn:f0_assumption}.

Since the Lie algebra can be thought of as infinitesimal motions near the identity of a Lie group, we define an error vector $\xi\in\mathbb{R}^{3(n+1)}$ called the  logarithmic invariant error as follows:
\begin{equation}\label{eqn:lie_log_error}
	\xi=\log(\eta)^\vee.
\end{equation}
For ease of notations, for an error $\eta\in SE_n(3)$, let $\xi:=\begin{pmatrix}
	\omega&\mathbf{v}_1&\cdots&\mathbf{v}_n
\end{pmatrix}^\top$ denote the corresponding Lie logarithm of it, where
$\omega^\wedge\in \mathfrak{so}_3$ and $\mathbf{v}_i\in\mathbb{R}^3$.  In what follows, we give our main results on the dynamical evolution of $\xi$ for both the left and the right invariant errors.
{
	\begin{theorem}\label{thm: log_linear}
		Consider the  dynamics~\eqref{eqn:dynamic_left_invariant_error}\eqref{eqn:dynamic_right_invariant_error} of ${\eta}$ where
		$f_0$ satisfies~\eqref{eqn:f0_assumption}. The dynamics of 
		$\xi$ in~\eqref{eqn:lie_log_error} is given by:\\
		Left invariant	
		\begin{equation}\label{eqn:left-invariant-error}
			\dot{\xi}_{L}=-\ad_{\mathbf{v}_b}\xi_{L}+\mathbf{J}(-\ad_{\xi_{L}^\wedge})^{-1}\mathbf{w}^\vee+\mathbf{A}\xi_{L},
		\end{equation}
		Right invariant
		\begin{equation}\label{eqn:right-invariant-error}
			\dot{\xi}_{R}=\ad_{\mathbf{v}_g}\xi_{R}+\mathbf{J}(\ad_{\xi_{R}^\wedge})^{-1}\Ad_{\hat{\mathbf{X}}}\mathbf{{w}}^\vee+\mathbf{A}\xi_{R},
		\end{equation}
		where $\mathbf{A}=\frac{\partial}{\partial\xi}f_0(\exp{(\xi)})$ and $\mathbf{J}(\ad_\mathbf{x}):=\sum_{i=0}^{\infty}\frac{1}{(i+1)!}(\ad_{\mathbf{x}})^{i}$ which is defined in Definition~\ref{def:Jacobians} in the Appendix, called the left Jacobians of $\mathbf{x}$ for $\mathbf{x}\in\mathfrak{se}_n(3)$.    
	\end{theorem}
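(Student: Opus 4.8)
The plan is to differentiate the defining relation $\eta=\exp(\xi^\wedge)$ along the flow and substitute the invariant error dynamics~\eqref{eqn:dynamic_left_invariant_error}--\eqref{eqn:dynamic_right_invariant_error}. First I would invoke the standard differential-of-the-exponential identity: for a smooth curve $\eta(t)=\exp(\xi(t)^\wedge)$ on $SE_n(3)$ one has $\dot\eta\,\eta^{-1}=\bigl(\mathbf{J}(\ad_{\xi^\wedge})\dot\xi\bigr)^\wedge$, with $\mathbf{J}$ the left Jacobian of Definition~\ref{def:Jacobians}; equivalently $\eta^{-1}\dot\eta=\bigl(\mathbf{J}(-\ad_{\xi^\wedge})\dot\xi\bigr)^\wedge$, the two being linked by $\Ad_{\exp(\xi^\wedge)}=\exp(\ad_{\xi^\wedge})$ together with $\exp(\ad_{\xi^\wedge})\mathbf{J}(-\ad_{\xi^\wedge})=\mathbf{J}(\ad_{\xi^\wedge})$. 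This turns the theorem into the problem of rewriting the right-hand sides of~\eqref{eqn:dynamic_left_invariant_error} and~\eqref{eqn:dynamic_right_invariant_error} in the form $(\,\cdot\,)^\wedge\eta$ and then solving for $\dot\xi$.

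For the left-invariant case I would multiply~\eqref{eqn:dynamic_left_invariant_error} on the right by $\eta_L^{-1}$ and collect the conjugations as adjoint actions, $\dot\eta_L\eta_L^{-1}=-\mathbf{v}_b+\Ad_{\eta_L}\mathbf{v}_b+\Ad_{\eta_L}\mathbf{w}+f_0(\eta_L)\eta_L^{-1}$, apply $(\cdot)^\vee$, use $\Ad_{\eta_L}=\exp(\ad_{\xi_L^\wedge})$ on the vector representation, and match with the dexp identity to get
\begin{equation*}
	\mathbf{J}(\ad_{\xi_L^\wedge})\dot\xi_L=\bigl(\exp(\ad_{\xi_L^\wedge})-\mathbf{I}\bigr)\mathbf{v}_b^\vee+\exp(\ad_{\xi_L^\wedge})\mathbf{w}^\vee+\bigl(f_0(\eta_L)\eta_L^{-1}\bigr)^\vee .
\end{equation*}
Left-multiplying by $\mathbf{J}(\ad_{\xi_L^\wedge})^{-1}$ and using three elementary facts about power series in $\ad_{\xi_L^\wedge}$ then completes the algebra: (i) $\ad_{\xi_L^\wedge}$ commutes with $\mathbf{J}(\ad_{\xi_L^\wedge})$ and $\exp(\ad_{\xi_L^\wedge})-\mathbf{I}=\ad_{\xi_L^\wedge}\mathbf{J}(\ad_{\xi_L^\wedge})$, so the $\mathbf{v}_b$ term collapses to $\ad_{\xi_L^\wedge}\mathbf{v}_b^\vee=-\ad_{\mathbf{v}_b}\xi_L$ by antisymmetry of $\ad$; (ii) $\mathbf{J}(\ad_{\xi_L^\wedge})^{-1}\exp(\ad_{\xi_L^\wedge})=\mathbf{J}(-\ad_{\xi_L^\wedge})^{-1}$, which turns the noise term into $\mathbf{J}(-\ad_{\xi_L^\wedge})^{-1}\mathbf{w}^\vee$; (iii) the drift term becomes $\mathbf{A}\xi_L$, argued below. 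The right-invariant case is entirely parallel after multiplying~\eqref{eqn:dynamic_right_invariant_error} on the right by $\eta_R^{-1}$; the only structural difference is that the noise enters as $(\Ad_{\bar{\mathbf X}}\mathbf w)\eta_R$ — already on the left and conjugated by the auxiliary trajectory rather than by $\eta_R$ — so no extra $\exp(\ad_{\xi_R^\wedge})$ factor is produced and the noise coefficient stays $\mathbf{J}(\ad_{\xi_R^\wedge})^{-1}\Ad_{\bar{\mathbf X}}$, reproducing~\eqref{eqn:right-invariant-error}.

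The one step that is not pure bookkeeping — and the step I expect to be the main obstacle — is showing that the drift contributes \emph{exactly} $\mathbf{A}\xi$ with a constant $\mathbf{A}$; this is precisely where assumption~\eqref{eqn:f0_assumption} is essential, and it is the reason the only nonlinearity sits in the noise term. I would set $h(s):=f_0(\exp(s\xi^\wedge))\exp(-s\xi^\wedge)$, note $h(0)=0$ because~\eqref{eqn:f0_assumption} forces $f_0(\mathbf{I})=0$, and differentiate using~\eqref{eqn:f0_assumption} in the form $f_0(\exp((s+\tau)\xi^\wedge))=f_0(\exp(s\xi^\wedge))\exp(\tau\xi^\wedge)+\exp(s\xi^\wedge)f_0(\exp(\tau\xi^\wedge))$: the cross terms cancel, leaving $\dot h(s)=\Ad_{\exp(s\xi^\wedge)}\,c$ with $c:=\tfrac{d}{d\tau}\big|_{0}f_0(\exp(\tau\xi^\wedge))$ \emph{linear} in $\xi$, say $c^\vee=\mathbf{A}\xi$. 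Integrating from $0$ to $1$ and passing to coordinates yields $\bigl(f_0(\exp(\xi^\wedge))\exp(-\xi^\wedge)\bigr)^\vee=\bigl(\int_0^1\exp(t\,\ad_{\xi^\wedge})\,dt\bigr)\mathbf{A}\xi=\mathbf{J}(\ad_{\xi^\wedge})\mathbf{A}\xi$, so after the premultiplication by $\mathbf{J}(\ad_{\xi^\wedge})^{-1}$ the drift is exactly $\mathbf{A}\xi$, with $\mathbf{A}$ the Jacobian of the statement. Finally I would remark that $\mathbf{J}(\ad_{\xi^\wedge})$ is invertible on the neighbourhood of the identity where $\log$ is a diffeomorphism — precisely the domain on which $\xi=\log(\eta)^\vee$ is defined — so all the inverses used above are legitimate; assembling the three terms gives~\eqref{eqn:left-invariant-error} and~\eqref{eqn:right-invariant-error}.
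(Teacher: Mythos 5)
Your proof is correct and rests on the same core mechanism as the paper's — differentiate $\eta=\exp(\xi^\wedge)$, apply the derivative-of-the-exponential formula, and invert the Jacobian — but it differs in two ways worth noting. First, you work throughout with the right trivialization $\dot\eta\,\eta^{-1}=\bigl(\mathbf{J}(\ad_{\xi^\wedge})\dot\xi\bigr)^\wedge$, whereas the paper uses the left trivialization $\eta^{-1}\dot\eta=\mathbf{J}(-\ad_{\xi^\wedge})\dot\xi$ (its Lemma~\ref{lemma:derivative_of_lie_group}); for the left-invariant case the paper's choice is marginally more economical, since left-multiplying by $\eta_L^{-1}$ turns the noise term $\eta_L\mathbf{w}$ directly into $\mathbf{w}$ and the coefficient $\mathbf{J}(-\ad_{\xi_L^\wedge})^{-1}$ appears without needing your identity $\mathbf{J}(\ad_{\xi^\wedge})^{-1}\exp(\ad_{\xi^\wedge})=\mathbf{J}(-\ad_{\xi^\wedge})^{-1}$ — but that identity is elementary and your bookkeeping lands on the same formulas, and your trivialization is the natural one for the right-invariant case that the paper skips. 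Second, and more substantively, your treatment of the drift term is a genuine improvement in self-containedness: the paper handles $g_0(\xi)=\mathbf{A}\xi$ by introducing an auxiliary noise-free trajectory $\dot\eta^\star=f_0(\eta^\star)$ and invoking Barrau's log-linearity result (Lemma~\ref{lemma:linear_invariant_error}) to identify the two expressions for $\dot\xi^\star$, whereas your cocycle argument — $h(s+\tau)=h(s)+\Ad_{\exp(s\xi^\wedge)}h(\tau)$ from~\eqref{eqn:f0_assumption}, hence $\dot h(s)=\Ad_{\exp(s\xi^\wedge)}c$ with $c$ linear in $\xi$, integrated to $\bigl(f_0(\exp(\xi^\wedge))\exp(-\xi^\wedge)\bigr)^\vee=\mathbf{J}(\ad_{\xi^\wedge})\mathbf{A}\xi$ — proves the needed identity directly and makes transparent why the $\mathbf{J}^{-1}$ prefactor cancels exactly on the drift, leaving the noise as the sole nonlinearity. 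Your closing remark on invertibility of $\mathbf{J}(\ad_{\xi^\wedge})$ within the injectivity radius plays the role of the paper's eigenvalue argument (Lemma~\ref{lemma:invertible matrix}); either justification suffices.
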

	\begin{proof}
	We only give a proof for the left invariant error case.  For the right invariant case, it is similar and skipped. 
	
	Denote $\xi:=\begin{pmatrix}
		\omega&\mathbf{v}_1&\ldots&\mathbf{v}_n
	\end{pmatrix}$.
	By~\eqref{eqn:sen3_matrix_form}, the matrix  $\ad_{\xi^\wedge}$ has eigenvalues up to multiplicity as follows:
	\begin{equation*}
		\lambda_1 = 0 \quad \lambda_2 = \left| \omega\right|\mathbf i \quad \lambda_3=-\left| \omega\right|\mathbf i.	
	\end{equation*}
	From Lemma~\ref{lemma:derivative_of_lie_group} together with~\eqref{eqn:dynamic_left_invariant_error} , we obtain that  
	\begin{equation*}
		\begin{split}
			&{\eta}(t)^{-1}\dot{{\eta}}(t)=\sum_{i=0}^{\infty}\frac{(-1)^i}{(i+1)!}(\ad_{\xi^\wedge})^i\dot{\xi}^\wedge\\
			&\text{i.e.,}\quad \mathbf{v}_b-\exp{\left( -\xi^\wedge\right) }\mathbf{v}_b\exp{\left( \xi^\wedge\right) }\\ &+\mathbf{w}+\exp{(-\xi^\wedge)}f_0(\exp{(\xi^\wedge)})=\sum_{i=0}^{\infty}\frac{(-1)^i}{(i+1)!}(\ad_{\xi^\wedge})^i\dot{\xi}.
		\end{split}
	\end{equation*}
	By Lemma~\ref{lemma:Adjoit_representation_of_B} and adjoint represent of a Lie algera, the above equation writes
	\begin{equation*}
		\left( \mathbf{I}-\text{e}^{-\ad_{\xi^\wedge}}\right)\mathbf{v}_b^\vee+\mathbf{w}^\vee+\left( \exp{(-\xi)}f_0(\exp{(\xi)})\right)^\vee= \frac {\mathbf{I}-e^{-\mathrm {ad} _{\mathbf{\xi^\wedge}}}}{\mathrm {ad} _{\mathbf{\xi^\wedge}}}\dot{\xi}.
	\end{equation*}
	Since the logarithm is defined within the injectivity radius around the identity of $\mathfrak{se}_n(3)$, we have $|\omega|\neq2k\pi, k=1,2,\ldots$ which together with Lemma~\ref{lemma:invertible matrix} yields that 
	\begin{equation}\label{eqn:aeqnoftheorem1}
		\dot{\xi}=\ad_{\xi^\wedge}\mathbf{v}_b^\vee+\left( \frac {\mathbf{I}-e^{-\mathrm {ad} _{\xi^\wedge}}}{\mathrm {ad} _{\xi^\wedge}}\right)^{-1}\mathbf{w}^\vee+g_0(\xi)
	\end{equation} 
	where $g_0(\xi):=\left(\eta(t) \frac {\mathbf{I}-e^{-\mathrm {ad} _{\xi(t)}}}{\mathrm {ad} _{\xi(t)}}\right)^{-1}f_0(\exp{(\eta)})$.  Now we will show that the mapping $g_0(\cdot):x\mapsto \mathbf{A}x$ is a linear mapping. Consider a state  $\eta^\star\in SE_n(3)$ which satisfies $\dot{\eta^\star}=f_0(\eta^\star)$ and a vector of its logarithm denoted as $\xi^\star$. By~Lemma~\ref{lemma:linear_invariant_error} and the property of $f_0$~\eqref{eqn:f0_assumption}, the  vector $\xi^\star$ have the following dynamic
	\begin{equation*}
		\dot{\xi}^\star=\mathbf{A}\xi^\star
	\end{equation*}
	where $\mathbf{A}$ can be computed by $f_{0}(exp(\xi^\star))=\mathbf{A}\xi^\star+O(\left\|\xi^\star \right\|^2 )$. From~Lemma~\ref{lemma:derivative_of_lie_group}, the vector $\xi^\star$ writes
	\begin{equation*}
		\dot{\xi}^\star = g_0(\xi^\star).
	\end{equation*}
	Therefore,  we conclude that $g_0(\xi^\star)=\mathbf{A}\xi^\star$. The equation~\eqref{eqn:aeqnoftheorem1} writes
	\begin{align*}				\dot{\xi}(t)&=\ad_{\xi^\wedge}\mathbf{v}_b^\vee+\left( \frac {\mathbf{I}-e^{-\mathrm {ad} _{\xi^\wedge}}}{\mathrm {ad} _{\xi^\wedge}}\right)^{-1}\mathbf{w}^\vee+g_0(\xi)\\
		&=-\ad_{\mathbf{v}_b}\xi+\mathbf{J}(-\ad_{\xi^\wedge})^{-1}\mathbf{w}^\vee+\mathbf{A}\xi,
	\end{align*}
	which completes the proof.
\end{proof}}

{
	The closed-form evolution of noisy 
	$\xi(t)$ is provided by Theorem~\ref{thm: log_linear}, which is described by a stochastic differential equation (SDE) in $\mathbb R^{3(n+1)}$. Though it is only the noisy terms in ~\eqref{eqn:left-invariant-error} and~\eqref{eqn:right-invariant-error} that cause the noisy dynamics to be nonlinear, Gaussianity of the distribution of $\xi(t)$ is not closed under the propagation
	given by~\eqref{eqn:left-invariant-error} and~\eqref{eqn:right-invariant-error}. 
	Finding a solution to the SDE is involved, and will be investigated in our future work.

	
	\begin{remark}[Relations with Existing Works]	
		In the existing literature~\cite{Barrau}, the dynamics of the logarithmic invariant error has been shown to satisfy a log-linear property when the dynamics~\eqref{eqn:dynamic_left_invariant_error} and~\eqref{eqn:dynamic_right_invariant_error} are noise free, while in the presence of noise
		$\mathbf w$, the noise term is tackled  by linear approximation~\cite{Barrau}. 	In this paper, we analyze the influence of the noise on the logarithmic invariant error propagation via the derivative of the matrix exponential. When the left Jacobians of the error are approximated to the identity by first-order linearization,
		our result boils down to results in~\cite{Barrau,barrau2018invariant,Hartley2020,Kanzhi2017}. More about the linearization approximation in $SE_2(3)$ will be discussed in Section~\ref{section:Approximation_methods}. 
	\end{remark}	
}	
	
	\section{Applications in VINS}\label{sec:VINS}
We will customize the right invariant error result on the $SE_2(3)$ group to IMU and apply it to analyze the IMU kinematics. Based on these theoretical results, we further design a filtering algorithm for state estimation of a mobile robot with the IMU in the realtime vision-aided inertial navigation scenario.
\subsection{Logarithmic  Right Invariant Error Propagation on the $SE_2(3)$}
We are interested in estimating a rigid body's 3D orientation, position, and velocity in the world frame, given angular velocity and acceleration measurements from the IMU attached to the rigid body. The above collection of variables forms the state variable of IMU and can be represented in the world frame as an element of  $SE_2(3)$. To be specific, the IMU state $\mathbf X_I(t)$ at time $t$ can be expressed as:
\begin{equation*}
	\mathbf{X}_{I}(t)=\begin{pmatrix}\begin{matrix}
			\mathbf{R}(t)
		\end{matrix}
		& \rvline & \mathbf{p}(t)\quad\mathbf{v}(t) \\
		\hline
		\mathbf{0}_{2\times3} & \rvline &
		\begin{matrix}
			\mathbf{I}_{2}
		\end{matrix}
	\end{pmatrix}
\end{equation*} 
where
\begin{enumerate}
	\item [(i)]  $\mathbf{R}(t)\in SO(3)$ represents the rotation of the rigid body relative to the reference frame, reflecting the rotation from the world frame to the body frame attached to the rigid body;
	\item [(ii)]$\mathbf{p}(t)\in\mathbb R^3$ and $\mathbf{v}(t)\in\mathbb R^3$ are the position and velocity relative to the world frame, respectively.
\end{enumerate}
The continuous-time kinematics of the orientation, velocity and position of the IMU are respectively described as follows:
\begin{equation}\label{eqn:model_equation}
	\dot{\mathbf{R}}(t)= \mathbf{R}(t)\omega(t)^\wedge,\quad\dot{\mathbf{p}}(t) = \mathbf{v}(t), \quad \dot{\mathbf{v}}(t) = \mathbf{a}(t),
\end{equation}  
where $\mathbf{\omega}$ denotes the angular velocity relative to the body frame and  $\mathbf{a}$ denotes the acceleration relative to the world frame. Because the measurements of $\omega$ and $\mathbf{a}$ suffer from gyroscope and accelerometer bias, IMU measurements $\omega_m$ and $\mathbf{a}_m$  are usually modeled as the true angular velocity and linear acceleration variables corrupted by additive Gaussian white noise plus measurement biases:  
\begin{align}\label{eqn:IMU measurement}
	\mathbf{\omega}_m(t)  &= \mathbf{\omega}(t)  +  \mathbf{b}_{\omega}(t)  +  \mathbf{n}_\omega(t), \notag\\
	\mathbf{a}_m(t)  &= \mathbf{R}(t)^{-1} \left( \mathbf{a}(t) -\mathbf{g}\right) +  \mathbf{b}_a(t)  + \mathbf{n}_a(t),
\end{align}  
where $\mathbf{g}$ denotes the gravitational acceleration relative to the world frame, and $\mathbf{n}_\omega$ and $\mathbf{n}_a$ are Gaussian noises, and $\mathbf{b}_\omega$ and $\mathbf{b}_a$ are the gyro and accelerometer biases. Typically, the biases are further modeled as stochastic processes driven by white Gaussian noises, that is,
$$\dot{\mathbf{b}}_{\omega}(t) = \mathbf{n}_{b_{\omega}}(t), \quad \dot{\mathbf{b}}_a(t) =\mathbf{n}_{b_{a}}(t).$$
Therefore, using the IMU measurements, the dynamics of the whole state $\mathbf X_{I}(t)$ can be given in a compact form:
\begin{equation}\label{eqn:vins_model}
	\dot{\mathbf{X}}_{{I}}(t)=\mathbf M \mathbf{X}_{{I}}(t) \mathbf N+\mathbf{X}_{{I}}(t)\mathbf{v}_b(t)+\mathbf{v}_g(t)\mathbf{X}_{{I}}(t),
\end{equation}
where $\mathbf{M}:=\begin{pmatrix}
	\mathbf{I}_{3}
	&\mathbf{0}_{3\times2}\\
	\mathbf{0}_{2\times3}&
	\begin{matrix}
		\mathbf{0}_{2}
	\end{matrix}
\end{pmatrix}$ 
and $\mathbf {N}:=\begin{pmatrix}
	\mathbf{0}_{4\times3}
	&\rvline&\mathbf{0}_{4\times2}\\
	\hline
	\mathbf{0}_{1\times3}&\rvline&
	1\quad 0
\end{pmatrix},$ and 	
$\mathbf{v}_b:=\begin{pmatrix}
	(\mathbf{\omega}_m -  \mathbf{b}_{\omega} - \mathbf{n}_\omega ) ^\wedge
	&\rvline&\mathbf{0}_{3\times1}\quad\mathbf{a}_m -  \mathbf{b}_a - \mathbf{n}_a\\
	\hline
	\mathbf{0}_{2\times3}&\rvline&
	\mathbf{0}_{2\times2}
\end{pmatrix}\in\mathfrak{se}_2(3)$ and 
$\mathbf{v}_g:=\begin{pmatrix}
	\mathbf{0}_{3\times3}
	&\rvline&\mathbf{0}_{3\times1}\quad\mathbf{g}\\
	\hline
	\mathbf{0}_{2\times3}&\rvline&
	\begin{matrix}
		\mathbf{0}_{2\times2}
	\end{matrix}
\end{pmatrix}\in\mathfrak{se}_2(3)$
are inputs relative to the body frame and the world frame respectively.
In~\eqref{eqn:vins_model}, $\mathbf M \mathbf{X}_{{I}}(t) \mathbf N$ reflects the ``autonomous'' part of the dynamics.

An estimate $\hat{\mathbf X}_I$ to $\mathbf X_I$ can be propagated by making an estimate to $\mathbf v_b$ from the IMU measurements:
\begin{equation}\label{eqn:estimation}
	\dot{\hat{\mathbf{X}}}_{{I}}(t)=\mathbf M \hat{\mathbf{X}}_{{I}}(t) \mathbf N+ \hat{\mathbf{X}}_{{I}}(t)\hat{\mathbf{v}}_b+\mathbf{v}_g(t) \hat{\mathbf{X}}_{{I}}(t)
\end{equation}
with 
$\hat{\mathbf{v}}_b:=\begin{pmatrix}
	(\mathbf{\omega}_m -  \hat{\mathbf{b}}_\omega  ) ^\wedge
	&\rvline&\mathbf{0}_{3\times1}\quad\mathbf{a}_m-  \hat{\mathbf{b}}_a\\
	\hline
	\mathbf{0}_{2\times3}&\rvline&
	\mathbf{0}_{2}
\end{pmatrix}$.
Note that the estimated biases maintain unchanged during the estimate propagation since the IMU measurement does not support us to refine our estimate to the device's biases, i.e.,  $\dot{\hat{\mathbf{b}}}_{\omega}=\mathbf{0}$ and $\dot{\hat{\mathbf{b}}}_a=\mathbf{0}$ for the propagation. We define the estimation error of the gyro and accelerometer biases as 
$\tilde{\mathbf{b}}_{\omega}:=
\hat{\mathbf{b}}_{\omega}-\mathbf{b}_{\omega}$ and $\tilde{\mathbf{b}}_a:=\hat{\mathbf{b}}_a-
\mathbf{b}_a$. The right invariant errors in~\eqref{eqn:invariant_error} denoted by $\eta_I:=\hat{\mathbf{X}}_{I}\mathbf{X}_{I}^{-1}$ for the state of IMU have the explicitly form:
\begin{equation*}
	\eta_I = \begin{pmatrix}\begin{matrix}
			\tilde{\mathbf{R}}(t)
		\end{matrix}
		& \rvline & \hat{\mathbf{p}}(t)-\tilde{\mathbf{R}}(t)\mathbf{p}(t)\quad\hat{\mathbf{v}}(t)-\tilde{\mathbf{R}}(t)\mathbf{v}(t) \\
		\hline
		\mathbf{0}_{2\times3} & \rvline &
		\mathbf{I}_2
	\end{pmatrix}.
\end{equation*}
We then have the following results for the dynamics of the logarithmic invariant error, $\xi_{I}:=\log( {\eta_{I}})^\vee$.
\begin{theorem}\label{pro:log_err_vins_model}
	Consider the dynamics~\eqref{eqn:vins_model} and~\eqref{eqn:estimation}. The dynamics of $\xi_I\in\mathbb R^{9}$ is given by :
	\begin{equation}\label{eqn:vins_invariant_error_log_dym}
		\dot{\xi_{I}} = \mathbf{A}\xi_{I}+\mathbf{J}(\ad_{{\xi_{I}}^\wedge})^{-1}\mathbf{B} 
		\begin{pmatrix}
			-\mathbf{\tilde{b}}_{\omega}+\mathbf{n}_\omega\\ 
			-\mathbf{\tilde{b}}_a+\mathbf{n}_a
		\end{pmatrix},
	\end{equation}
	where $\mathbf{A}:=\begin{pmatrix}
		\mathbf{0}&\mathbf{0}&\mathbf{0}\\
		\mathbf{0}&\mathbf{0}&\mathbf{I}_{3}\\
		\mathbf{g}^\wedge&\mathbf{0}&\mathbf{0}
	\end{pmatrix}$ and $\mathbf{B}:=\begin{pmatrix}
		\hat{\mathbf{R}}&\mathbf{0}\\
		\hat{\mathbf{p}}^\wedge\hat{\mathbf{R}}&\mathbf{0}\\
		\hat{\mathbf{v}}^\wedge\hat{\mathbf{R}}&\hat{\mathbf{R}}\end{pmatrix}$.
\end{theorem}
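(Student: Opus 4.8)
The plan is to identify the VINS propagation pair \eqref{eqn:vins_model}--\eqref{eqn:estimation} with the abstract pair \eqref{eqn:dynamic_sen(3)}--\eqref{eqn:dynamics_bar_X} on $SE_2(3)$ and then specialize the right-invariant formula \eqref{eqn:right-invariant-error} of Theorem~\ref{thm: log_linear}. First I would set $f_0(\mathbf X):=\mathbf M\mathbf X\mathbf N$ and check that it obeys the structural identity \eqref{eqn:f0_assumption}: a block computation shows that $\mathbf M\mathbf X\mathbf N$ merely inserts the velocity column of $\mathbf X$ into its position column (so that it encodes $\dot{\mathbf p}=\mathbf v$), from which $f_0(\mathbf X_1\mathbf X_2)=\mathbf M\mathbf X_1\mathbf X_2\mathbf N=f_0(\mathbf X_1)\mathbf X_2+\mathbf X_1 f_0(\mathbf X_2)$ follows, so Theorem~\ref{thm: log_linear} applies with this $f_0$ and with $\mathbf v_g$ as defined. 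Here the true trajectory $\mathbf X_I$ plays the role of $\mathbf X$ and the propagated estimate $\hat{\mathbf X}_I$ that of $\bar{\mathbf X}$, so $\eta_I=\hat{\mathbf X}_I\mathbf X_I^{-1}$ is the right-invariant error of \eqref{eqn:invariant_error} and its dynamics are \eqref{eqn:dynamic_right_invariant_error}. Comparing \eqref{eqn:vins_model} with \eqref{eqn:estimation}, the discrepancy between the body input actually used and the exact one is $\mathbf w=\hat{\mathbf v}_b-\mathbf v_b$; substituting the IMU measurement model \eqref{eqn:IMU measurement} and the bias-error definitions $\tilde{\mathbf b}_\omega,\tilde{\mathbf b}_a$ gives, in vee coordinates, $\mathbf w^\vee=\big((-\tilde{\mathbf b}_\omega+\mathbf n_\omega)^\top,\ \mathbf 0_3^\top,\ (-\tilde{\mathbf b}_a+\mathbf n_a)^\top\big)^\top\in\mathbb R^9$, the zero middle block reflecting that an IMU supplies no direct position input.

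It then remains to evaluate the three terms of \eqref{eqn:right-invariant-error}. For the autonomous linear part, the $\ad$-template of \eqref{eqn:sen3_matrix_form} applied to $\mathbf v_g$ (which carries only the gravity entry in the velocity slot) leaves $\mathbf g^\wedge$ as its sole nonzero block, in the velocity--rotation position, while the first-order expansion $\exp(\xi^\wedge)\approx\mathbf I+\xi^\wedge$ gives $\big(f_0(\exp\xi)\big)^\vee=(\mathbf M\xi^\wedge\mathbf N)^\vee+O(\|\xi\|^2)$, whose Jacobian $\mathbf A=\frac{\partial}{\partial\xi}f_0(\exp\xi)$ has $\mathbf I_3$ as its sole nonzero block, coupling the position and velocity coordinates; adding $\ad_{\mathbf v_g}+\mathbf A$ reproduces precisely the matrix $\mathbf A$ in the statement. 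For the noise gain, writing out $\Ad_{\hat{\mathbf X}_I}$ from \eqref{eqn:sen3_matrix_form} with translations $\hat{\mathbf p},\hat{\mathbf v}$ and multiplying by the $\mathbf w^\vee$ above --- which annihilates the middle block column of $\Ad_{\hat{\mathbf X}_I}$ --- collapses $\Ad_{\hat{\mathbf X}_I}\mathbf w^\vee$ to $\mathbf B\,\big((-\tilde{\mathbf b}_\omega+\mathbf n_\omega)^\top,(-\tilde{\mathbf b}_a+\mathbf n_a)^\top\big)^\top$ with $\mathbf B$ as stated. Substituting these three pieces into \eqref{eqn:right-invariant-error} yields \eqref{eqn:vins_invariant_error_log_dym}.

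The one place to be careful, and essentially the only bookkeeping obstacle, is keeping the $SE_2(3)$ column ordering (position versus velocity) consistent across $\mathbf X_I$, $\mathbf v_b$, $\mathbf v_g$ and the generic $\ad/\Ad$ templates \eqref{eqn:sen3_matrix_form}, since a swapped ordering would move $\mathbf g^\wedge$ and $\mathbf I_3$ into the wrong blocks of $\mathbf A$ and permute the block rows of $\mathbf B$; everything else is substitution into Theorem~\ref{thm: log_linear} together with routine block-matrix algebra. It is also worth noting that, unlike the white-noise hypothesis used in \eqref{eqn:dynamics_bar_X}, the effective input perturbation $\mathbf w$ here carries the non-white bias error $\tilde{\mathbf b}$; this is harmless for deriving \eqref{eqn:vins_invariant_error_log_dym}, which is a pathwise identity, and matters only for the subsequent stochastic interpretation.
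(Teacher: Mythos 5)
Your proposal is correct and follows essentially the same route as the paper: identify $f_0(\mathbf X)=\mathbf M\mathbf X\mathbf N$ and the input discrepancy $\mathbf W=\hat{\mathbf v}_b-\mathbf v_b$, recognize $\eta_I=\hat{\mathbf X}_I\mathbf X_I^{-1}$ as the right-invariant error obeying~\eqref{eqn:dynamic_right_invariant_error}, and then specialize Theorem~\ref{thm: log_linear}. You are in fact somewhat more explicit than the paper's proof, which writes down $\dot\eta_I$ directly and leaves the verification of~\eqref{eqn:f0_assumption} and the block evaluations of $\ad_{\mathbf v_g}$, $\partial f_0$, and $\Ad_{\hat{\mathbf X}_I}\mathbf w^\vee$ implicit; your remarks on column ordering and on the non-white bias term are valid additional care, not deviations.
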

\begin{proof}
	The dynamics of right invariant error $\eta_I$ can be computed as   
	\begin{align*}
		\dot{\eta_I}&=\dot{\hat{\mathbf{X}}}_{I}\mathbf{X}_{I}^{-1}-\mathbf{X}_{I}^{-1}\hat{\mathbf{X}}_{I}\dot{\mathbf{X}}_{I}\mathbf{X}_{I}^{-1}\\
		&=\mathbf{v}_g\eta_I-\eta_I\mathbf{v}_g+\mathbf{M}\eta_I\mathbf{N}+\Ad_{\hat{\mathbf{X}}_{I}}\mathbf{W}\eta_I
	\end{align*}
	where $\mathbf{W}:=\begin{pmatrix}
		-\mathbf{\tilde{b}}_{\omega}^{\wedge}+\mathbf{n}_\omega^\wedge &\mathbf{0}&-\mathbf{\tilde{b}}_a+\mathbf{n}_a\\
		\mathbf{0}&\mathbf{0}&\mathbf{0}
	\end{pmatrix}$.	By Theorem~\ref{thm: log_linear}, we get~\eqref{eqn:vins_invariant_error_log_dym}, which completes the proof. 
\end{proof}

{
	\subsection{Approximation Methods for Error Propagation }\label{section:Approximation_methods}
	The error propagation in the Lie algebra is essentially a diffusion process driven by a white random process $\mathbf{w}(t)$ described by~\eqref{eqn:vins_invariant_error_log_dym}. The solution of the SDE depends on $\mathbf{J}(\ad_{{\xi_{I}}^\wedge})^{-1}\Ad_{\hat{\mathbf{X}}_{I}}$, causing mathematical difficulty in analysis. For estimation purposes, we will discuss on the following different approximations for the uncertainty propagation in an IMU. 
	\begin{enumerate}[(i).]
		\item\textit{Under Small Invariant Error:}
		It is usually assumed that the error $\eta_I$ is small enough, that is, $\eta_I$ is sufficiently close to the identity as $\mathbf X_{I}$ tracks $\hat {\mathbf X}_{I}$ well so that the following approximation is viable:
		$\xi_I \approx 0$ and $\mathbf{J}(\ad_{{\xi_{I}}^\wedge})^{-1} \approx \mathbf I$. As a consequence, \eqref{eqn:vins_invariant_error_log_dym} can be significantly simplified. Now we define an augmented state for the IMU device as:
		$
		\begin{pmatrix}
			\mathbf X_I\\
			\mathbf b
		\end{pmatrix},\hbox{~with~}
		\mathbf b:=\begin{pmatrix}
			\mathbf b_{\omega}\\
			\mathbf b_{a}
		\end{pmatrix}.
		$
		Then the augmented error can be approximated in a compact form via adding the simple vector differences $\tilde{\mathbf{b}}$\footnote{
			The augmentation technique is also used for developing 
			``imperfect'' IEKF in~\cite{barrau2015non}. The introduction of additional vector difference
			coins the term of ``imperfect'' IEKF, since it
			sacrifices all the properties of the IEKF, see~\cite{barrau2015non} for details.}:
		\begin{equation}\label{eqn: IMU_augment_state_propagation}
			\begin{pmatrix}
				\dot{\xi_{I}}\\
				\dot{\tilde{\mathbf{b}}}
			\end{pmatrix}\approx\begin{pmatrix}
				\mathbf{A}
				& -\mathbf{B} \\
				\mathbf{0}  &
				\mathbf{0}
			\end{pmatrix}
			\begin{pmatrix}
				\xi_{I}\\
				\tilde{\mathbf{b}}
			\end{pmatrix}+\begin{pmatrix}
				\mathbf{B}
				& \mathbf{0} \\
				\mathbf{0}  &
				\mathbf{I}_2
			\end{pmatrix}
			\mathbf{n}_{\rm imu}
		\end{equation}			
		where $\mathbf{n}_{\rm imu}:=\left( \mathbf{n}_\omega^\top~ \mathbf{n}_a^\top~\mathbf{n}_{b_{\omega}}^\top~ \mathbf{n}_{b_a}^\top\right)^\top $ denotes the total IMU internal device noise, which are assumed to follow a normal distribution~$\mathcal{N}(0,\mathbf{Q}_{\rm imu})$. Note that the covariance $\mathbf{Q}_{\rm imu}$ of $\mathbf{n}_{\rm imu}$ is usually accessible through sensor calibration. When the bias error handling the bias and measurement noise denoted as $\mathbf{\tilde{b}}_{i}-\mathbf{n}_{i}$ in~\eqref{eqn:vins_invariant_error_log_dym} for $i\in\left\lbrace \omega, a \right\rbrace $ is sufficiently small, the closed-form formula can also be approximated as~\eqref{eqn: IMU_augment_state_propagation}. The propagation result is essentially the same with the uncertainty propagation proposed in~\cite{Hartley2020} and~\cite{Kanzhi2017}. We derive the result from a different formula, and it clarifies how the bias error affects the uncertainty propagation.
		\item \textit{Imitating Jacobians of Invariant Error to Compensate Error Covariance Propagation:} Approximation as we do in (i)  for~\eqref{eqn: IMU_augment_state_propagation} leads to inaccurate covariance propagation when $\xi_I$ is large. To tackle this problem, we propose to use an additional variable, termed $\xi_\delta$, drawn independently from some certain distribution to replace $\xi_I$ in $\mathbf{J}(\ad_{\xi^\wedge})$. By doing so, 
		we have a new approximation of~\eqref{eqn:vins_invariant_error_log_dym} as follows:
		\begin{equation}\label{eqn:correct_error_propagation}
			\begin{pmatrix}
				\dot{\xi_{I}}\\
				\dot{\tilde{\mathbf{b}}}
			\end{pmatrix}\approx
			\underbrace{ \begin{pmatrix}
					\mathbf{A}
					& -\mathbf{J}_\delta^{-1}\mathbf{B} \\
					\mathbf{0}  &
					\mathbf{0}
			\end{pmatrix}}_\mathbf{F}
			\begin{pmatrix}
				\xi_{I}\\
				\tilde{\mathbf{b}}
			\end{pmatrix}+\underbrace{\begin{pmatrix}
					\mathbf{J}_\delta^{-1}\mathbf{B}
					& \mathbf{0} \\
					\mathbf{0}  &
					\mathbf{I}_2
			\end{pmatrix}}_\mathbf{G}
			\mathbf{n}_{\rm imu}.
		\end{equation}
		The above approximation is unrealistic 
		in the sense that 
		we cannot construct an estimate
		$\mathbf X_I$ in any way in practice to generate such $\xi_{I}$ evolving like the right-hand side of~\eqref{eqn:correct_error_propagation}. However, we can
		work out a better estimation error covariance using~\eqref{eqn:correct_error_propagation}.
		Let  $\mathbf{P}(t):=\mathop{\mathbb{E}}
		[(\xi_I~\tilde{\mathbf{b}})(\xi_I~\tilde{\mathbf{b}})^\top]$
		denote the estimate error covariance. In virtue of~\eqref{eqn:correct_error_propagation}, we use the following equation to approximate the evolution of $\mathbf P(t)$:
		\begin{equation}\label{eqn:compute_propagated_covariance}
			\dot{\mathbf{P}}=\mathbf{F}\mathbf{P}+\mathbf{P}\mathbf{F}^\top+\mathbf{G}\mathbf{Q}_{{\rm imu}}\mathbf{G}^\top
		\end{equation}
		with $\mathbf F$ and $\mathbf G$ given in~\eqref{eqn:correct_error_propagation}.
		Compared to the exact error covariance propagation, numerical integration under~\eqref{eqn:compute_propagated_covariance} is more viable in terms of computation efficiency to predict the error covariance to the next measurement sampling time since
		$\xi_\delta$ can be independently drawn beforehand.
		Note that the estimation error is seldom zero most of the time and sometime may be large due to external disturbance in the experiment. Hence the ``trick'' that uses $\mathbf F$ and $\mathbf G$ in~\eqref{eqn:correct_error_propagation} to propagate the covariance works better than the linearization approximation technique in most cases. 
		Regarding the problem of  choosing a good distribution for $\xi_\delta$, 
		our experimental experience is reported in Section~\ref{epxeriment_evaluatoin:silumation}. Our current understanding of this issue is far from sufficient, and we may need further investigation on it in future work.
	\end{enumerate}

	\subsection{Logarithmic Invariant Error Propagation in VINS }
	In this part, we apply the theoretical results to filter design for motion state estimation of a mobile robot with the IMU in a real-time VINS application. Now suppose there are $n$ landmarks, the positions of which denoted as $\mathbf{f}=(\mathbf{f}_1\cdots\mathbf{f}_n)$.
	In VINS, we need to estimate the robot's real-time attitude and position as well as the positions of the landmarks.
	The system state, including the robot's pose and the landmarks' position, should be augmented to an element in $SE_{n+2}(3)$, that is,
	$$\begin{pmatrix}
		\begin{matrix}
			\mathbf{R}
		\end{matrix}
		& \rvline & \mathbf{p}\quad\mathbf{v}\quad\mathbf{f} \\
		\hline
		\mathbf{0}_{3\times3} & \rvline &
		\begin{matrix}
			\mathbf{I}_{3}
		\end{matrix}
	\end{pmatrix}$$ 
	and the right invariant error for an estimate can be expressed as  
	$$ \begin{pmatrix}\begin{matrix}
			\tilde{\mathbf{R}}
		\end{matrix}
		& \rvline & \hat{\mathbf{p}}-\tilde{\mathbf{R}}\mathbf{p}\quad\hat{\mathbf{v}}-\tilde{\mathbf{R}}\mathbf{v} \quad\hat{\mathbf{f}}-\tilde{\mathbf{R}}\mathbf{f}\\
		\hline
		\mathbf{0}_{3\times3} & \rvline &
		\mathbf{I}_3
	\end{pmatrix}.$$
	In this paper we consider spatially static landmarks with $\dot{\mathbf{f}} = 0$.
	Let $\xi_{X}$ denote the logarithm invariant error. Since the system dynamics satisfies the form of~\eqref{eqn:dynamic_sen(3)}, by Theorem~\ref{thm: log_linear}, we obtain
	\begin{equation}\label{eqn:vins_propogation}
		\begin{split}
			&\begin{pmatrix}
				\dot{\xi}_{X}\\
				\dot{\tilde{\mathbf{b}}}
			\end{pmatrix}=
			\underbrace{ \begin{pmatrix}
					\mathbf{A}_f
					& -\mathbf{B}_f \\
					\mathbf{0}  &
					\mathbf{0}
			\end{pmatrix}}_\mathbf{F}
			\underbrace{\begin{pmatrix}
					\xi_{X}\\
					\tilde{\mathbf{b}}
			\end{pmatrix}}_\mathbf{e}+\underbrace{\begin{pmatrix}
					\mathbf{B}_f
					& \mathbf{0} \\
					\mathbf{0}  &
					\mathbf{I}_2
			\end{pmatrix}}_\mathbf{G}
			\mathbf{n}_{\rm imu}\\
			&\mathbf{A}_f = \begin{pmatrix}
				\mathbf{0}&\mathbf{0}&\mathbf{0}&\mathbf{0}\\
				\mathbf{0}&\mathbf{0}&\mathbf{I}_{3}&\mathbf{0}\\
				\mathbf{g}^\wedge&\mathbf{0}&\mathbf{0}&\mathbf{0}\\
				\mathbf{0}&\mathbf{0}&\mathbf{0}&\mathbf{0}
			\end{pmatrix}
			\quad\mathbf{B}_f=\mathbf{J}(\ad_{{\xi_{X}}^\wedge})^{-1}\begin{pmatrix}
				\hat{\mathbf{R}}&\mathbf{0}\\
				\hat{\mathbf{p}}^\wedge\hat{\mathbf{R}}&\mathbf{0}\\
				\hat{\mathbf{v}}^\wedge\hat{\mathbf{R}}&\hat{\mathbf{R}}\\
				\hat{\mathbf{f}}^\wedge\hat{\mathbf{R}}&\mathbf{0}\end{pmatrix}.
		\end{split}
	\end{equation}
	The approximation methods provided in~Section~\ref{section:Approximation_methods} can also be applied to approximate $\mathbf{B}_f$.
}

\subsection{Visual Measurements Update in VINS}
In the VINS problem, external feature positions measurements should be obtained by the camera. The camera's pose relative to the world frame can be expressed in terms of the IMU's pose and position as well as the relative pose and position between the camera and IMU:
\begin{equation*}
	\mathbf{X}_{C} = \begin{pmatrix}
		\mathbf{R}
		& \mathbf{p} \\
		{0} &
		1
	\end{pmatrix}\mathbf{X}_{I}^{C},
\end{equation*}
where $\mathbf{X}_{I}^{C}:=\begin{pmatrix}
	\mathbf{R}_{I}^{C}
	& \mathbf{p}_{I}^{C} \\
	{0} &  
	1
\end{pmatrix}\in SE(3)$ encodes the relative rotation $\mathbf{R}_{I}^{C}$ from
the IMU frame to the camera one as well as the
relative
position $\mathbf{p}_{I}^{C}$ of the origin of the camera frame  relative to the IMU frame.
In practice,  $\mathbf{R}_{I}^{C}$ and  $\mathbf{p}_{I}^{C}$ may be accurately obtained through hardware calibration. 

When the camera is exploring the environment and tracking the landmark, the measurement model in the discrete-time form at time step $k$~(the continuous-time form denoted as $t_k$) is defined as follows:
\begin{equation}\label{eqn:right_invariant_error_measurement}
	\mathbf{z}(k)=\mathbf{\pi}\left( \mathbf{X}_{C}(k)^{-1}\mathbf{f} \right) + \mathbf{n}_{C }(k)\quad \mathbf{n}_{C }\sim\mathcal{N}(0,\mathbf{N})
\end{equation} 
where $\pi(\mathbf x):=\mathbf K\frac{\mathbf x}{\|\mathbf x\|}$ denotes the camera projection mapping with $\mathbf K$ being the camera intrinsic matrix, and $\mathbf{n}_{C  }$ is the measurement noise. The linearized measurement equation w.r.t. the errors writes:
\begin{equation}
	\begin{split}
		&\tilde{\mathbf{z}}=\mathbf{z}(k+1) -\mathbf{\pi}\left(\hat{\mathbf{X}}_{C}(k+1|k)^{-1}\hat{\mathbf{f}} \right)=	\mathbf{H}{\mathbf{e}}+O(\left\|\mathbf{e} \right\|^2 )\\
		&\mathbf{H}=	 \partial\pi\cdot\mathbf{R}_{I}^{C_i-1}\mathbf{\hat{R}}^{-1}\begin{pmatrix}
			\mathbf{0}_3&-\mathbf{I}_3&\mathbf{0}_3&\mathbf{0}_3&\mathbf{0}_3&\mathbf{I}_3
		\end{pmatrix}
	\end{split}
\end{equation}
where $\partial \pi$ represents the projection Jacobian and $\hat{\mathbf{X}}_C(k+1|k)$ denotes \textit{a priori} estimate state.

The IEKF update proceeds by calculating the Kalman gian:
\begin{equation}
	\mathbf{K}=\mathbf{P}(k+1|k)\mathbf{H}^{\top}\left(\mathbf{H}\mathbf{P}(k+1|k)\mathbf{H}^{\top}+\mathbf{N} \right)^{-1}.
\end{equation} 
After obtaining the Kalman gain and the transformed residuals, the estimated state are corrected separately. Because of the right-invariant observation equation~\eqref{eqn:right_invariant_error_measurement}, the innovation also depends on the invariant error and the updated equation for the IMU pose $\mathbf{X}_I$ takes the following form~\cite{Barrau}:
\begin{equation}\label{eqn:update_state}
	\begin{split}
		\hat{\mathbf{X}}_{I}(k+1|k+1)&=\exp{\left( (\mathbf{K}\tilde{\mathbf{z}})_{[I]}\right) }\hat{\mathbf{X}}_{I}(k+1|k) ,\\
	\end{split}
\end{equation}
where $(y)_{[x]}$ represents the row  of the vector $y$ corresponding with $x$. Because the bias cannot be added in the matrix group, their updated equation writes as follows 
\begin{equation}\label{eqn:eqn:update_bias}
	\hat{\mathbf{b}}(k+1|k+1) = \hat{\mathbf{b}}(k+1|k) +(\mathbf{K}\tilde{z})_{[\mathbf{\hat{b}}]}.
\end{equation}

The corresponding covariance matrix is updated by:
\begin{equation}\label{eqn:update_covariance}		\mathbf{P}(k+1|k+1)=\left(\mathbf{I}-\mathbf{K}\mathbf{H}_{\mathbf{X}}\right)\mathbf{P}(k+1|k).
\end{equation}

{
	
	
	\subsection{Discussion on the Estimators' Consistency}
	The observability of a dynamical system, which indicates whether we are capable of recovering the initial states of the system with a sequence of system output of a certain length, can be used for consistency analysis of an estimator design for the system~\cite{Huang2008}. We will discuss the consistency of the proposed estimator via observability, following the work \cite{Huang2008}. Note that irrespective of the estimate of  $\mathbf{b}$ in~\eqref{eqn:vins_propogation}, our logarithmic invariant error propagation is linear, time-invariant (To be precise, the linear dynamics is nilpotent.). In the discretized dynamical system for a given  sampling period $\delta t$, the transition matrix is a matrix of  polynomials in $\delta t$ as follows:
	\begin{equation*}
		\Phi=\begin{pmatrix}
			\mathbf{I}&\mathbf{0}&\mathbf{0}&\mathbf{0}\\
			\frac{1}{2}\mathbf{g}^\wedge\delta t^2&\mathbf{I}&\mathbf{I}\delta t&\mathbf{0}\\
			\mathbf{g}^\wedge\delta t&\mathbf{0}&\mathbf{I}&\mathbf{0}\\
			\mathbf{0}&\mathbf{0}&\mathbf{0}&\mathbf{I}
		\end{pmatrix}.
	\end{equation*}
	The observation matrix $\mathcal{O}$ is given by
	\begin{equation*}
		\mathcal{O}=\begin{pmatrix}
			\mathbf{H}\\
			\mathbf{H}\Phi\\
			\mathbf{H}\Phi^2\\
			\vdots
		\end{pmatrix}=\begin{pmatrix}
			\mathbf{0}&-\mathbf{I}&\mathbf{0}&\mathbf{I}\\-\frac{1}{2}\mathbf{g}^\wedge\delta t^2&-\mathbf{I}&-\mathbf{I}\delta t&\mathbf{I}\\-\frac{3}{2}\mathbf{g}^\wedge\delta t^2&-\mathbf{I}&-2\mathbf{I}\delta t^2&\mathbf{I}\\\vdots&\vdots&\vdots&\vdots
		\end{pmatrix}.
	\end{equation*}
	The six columns of the matrix are linearly dependent, which is consistent with the nonlinear observability result of the original system~\cite{Huang2008}. 
	Notice that the introduction of the imitative logarithmic invariant error $\xi_{\delta}$
	in place of the non-accessible $\xi_X$ in an estimator has nothing to do with the system observability. Hence, the linear approximation and the imitative invertible Jacobians design for the VINS maintain consistency. 
}

	\subsection{Visual Measurement Update in MSCKF}
	In the conventional VINS problem, enormous external features positions need to be estimated and included in the filter's state vector, which leads to high computational cost. To tackle this problem, in the multi-state constraints Kalman filter (MSCKF)~\cite{Mourikis2007}, the filtering frame does not require putting the 3D feature position but the camera's multi-state poses into the state vector. Besides, this frame increases the robustness against invariant error approximation and linearization of the camera measurement equation. This subsection will introduce the IEKF method embedded in MSCKF.
	
	The MSCKF maintains a window of sequential camera poses and augments the state via discarding the past camera poses and coping with the current one every time a new image comes. Suppose that $N$ consecutive camera poses states are cloned and stacked into the state vector. In addition, the vector also includes the augmented IMU state at the current time. 
	\begin{equation*}
		\mathbf{X}(t)=(\mathbf{X}_{I}(t),~ \mathbf{b}(t),~\mathbf{X}_{C1},\ldots,\mathbf{X}_{CN})	
	\end{equation*} 
	where $\mathbf{X}_{Ci}\in SE(3), i=1,\ldots,N$, denote the pose estimates of the camera.
	
	Computing the covariance at propogation step is the same as the conventional VINS. Regarding the update step, when the feature is captured at the $k$th sampling time by the $i$th camera pose, the measurement model is defined as follows:
	\begin{equation*}
		\mathbf{z}_i(k)=\mathbf{\pi}\left( \mathbf{X}_{C_{i}}(k)^{-1}\mathbf{f} \right) + \mathbf{n}_{C_{i}}(k)
	\end{equation*} 
	where $\pi(\mathbf x):=\mathbf K\frac{\mathbf x}{\|\mathbf x\|}$ denotes the camera projection mapping with $\mathbf K$ being the camera intrinsic matrix, $\mathbf{f}$ is the position of the feature in the world frame and $\mathbf{n}_{C_{i}}$ is the measurement noise, which is Gaussian with covariance $\mathbf{N}_i$. By solving a least-squares minimization problem as a first step\cite{Mourikis2007}, the feature position can be estimated and denoted by $\mathbf{\hat{f}}$. Given $\mathbf{\hat{f}} $, we compute the residual term for the $i$th image frame as follows:
	\begin{equation*}
		\tilde{\mathbf{z}}_i(k)=\mathbf{z}_i(k)-\mathbf{\hat{z}}_i(k).
	\end{equation*}
	This residual term is then linearized with respect to ${\xi}_{Ci}$ and the feature estimated error $\mathbf {\tilde{f}}:=\mathbf{\hat{f}}-\mathbf{f}$:
	\begin{equation*}\label{eqn:compute_H}
		\begin{split}
			&\tilde{\mathbf{z}}_i(k+1)=\partial\pi\cdot \left( \mathbf{H}_{\mathbf{X}_{Ci}}\xi_{C_{i}}(k+1|k)+\mathbf{H}_{\mathbf{f}}\tilde{\mathbf{f}}\right) +\mathbf{n}_C(k+1) \\
			&\mathbf{H}_{\mathbf{X}_{Ci}}  =\left(\mathbf{R}_{I}^{C_i-1}\mathbf{\hat{R}}(k+1|k)^{-1}\mathbf{f}^\wedge\quad -\mathbf{R}_{I}^{C_i-1}\mathbf{\hat{R}}(k+1|k)^{-1}\right)\\
			&\mathbf{H}_{f} =\mathbf{R}_{I}^{C_i-1}\mathbf{\hat{R}}(k+1|k)^{-1}.\\
		\end{split}
	\end{equation*}
	To overcome the problem that the estimated landmark errors $\tilde{\mathbf{f}}$ is correlated to the estimated errors $\xi_{Ci}$, we define a transformed residue $\tilde{\mathbf{z}}_{oi}$ by projecting $\tilde{\mathbf{z}}_{i}$ onto the left nullspace of the matrix $\mathbf{H}_f$. The projection is carried out by a QR decomposition:
	\begin{align}
		\mathbf{H}_{f} &= \begin{pmatrix} \mathbf{Q_1} & \mathbf{Q_2} \end{pmatrix} \begin{pmatrix} \mathbf{R_1} \\ \mathbf{0} \end{pmatrix} = \mathbf{Q_1}\mathbf{R_1}\\
		\begin{split}\label{eqn:compute H_xo}
			\mathbf{Q_2}^\top\tilde{\mathbf{z}}_{i} &= \mathbf{Q_2}^\top\mathbf{H}_{\mathbf{X}_{Ci}} \mathbf{\xi}_{Ci} + \mathbf{Q_2}^\top\mathbf{n}_{Ci} \\[5px] \empty \Rightarrow~ \tilde{\mathbf{z}}_{oi} &= \mathbf{H}_{\mathbf{X}_{Cio}}\mathbf{\xi}_{Ci} + \mathbf{n}_{o,Ci}. 
		\end{split}
	\end{align}
	By stacking all transformed residuals in one vector $\mathbf{z}_o$, we obtain that 
	\begin{equation}
		\tilde{\mathbf{z}}_{o}= \mathbf{H}_{\mathbf{X}_{o}}\xi+\mathbf{n}_{o}
	\end{equation}
	where the covariacne of $\mathbf{n}_{o}$ is denoted by $\mathbf{N}$.

\section{Experimental Evaluation} \label{sec:evaluation}
We evaluate the proposed algorithm based on the closed-form expression of right invariant error propagation by simulations and experiments on datasets. We evaluate the consistency of our algorithms and standard extended Kalman filter by simulations. We compare the imitating Jacobians for Invariant Extended Kalman Filter (IJ-IEKF) with quaternion-based extended Kalman filter (QEKF)~\cite{Trawny2005IndirectKF},  first estimates Jacobian EKF (FEJ)~\cite{Huang2008}, and IEKF in experiments. 
{
	\subsection{Simulations}\label{epxeriment_evaluatoin:silumation}
	We simulate the IMU readings and camera measurements with the trajectory~shown in Fig.~\ref{fig:trajectory} and implement 50 Monte-Carlo runs with the EKF, IEKF, and IJ-IEKF algorithms. In this simulation, we use uniform distributed random variables $\xi_{\delta}\sim\mathcal{U}(-r,r)$ 
	where $r$ denotes the range of invariant error and can be set heuristically from experimental experience. We then calculate $\mathbf{J}_\delta:=\mathbf{J}(\ad_{\xi_\delta^\wedge})$ via Definition~\ref{def:Jacobians}. Different ranges~($r=0.01,0.1,0.5,1.0,2.0$) are set in the simulation to imitate the Jacobians of orientation errors. It is noted that the position errors to imitate are set as $0$. The root mean squared error~(RMSE) and the normalized estimation error squared~(NEES), which has been divided by the degrees of freedoms of the state variables, are used for evaluation. Results are shown in Table~\ref{tab:Simulation_result} and Fig.~\ref{fig:Simulation}.
	\begin{table}[h]
		\centering
		\caption{\textbf{Simulations.} Root Mean Squared Error~(RMSE) and Normalized Estimation Error Squared (NEES)}
		\label{tab:Simulation_result}  
		
		\begin{tabular}{c|ll|ll}
			\hline\hline\noalign{\smallskip}
			& \multicolumn{2}{|c|}{\textbf{RMSE}} & \multicolumn{2}{|c}{\textbf{NEES}} \\ \hline
			& \textbf{Pos.(m)} & \textbf{Ang.(rad)} &\textbf{Pos.} & \textbf{Ang.} \\ \hline
			\textbf{EKF}  & 1.1520&0.0198&2.9966&2.5539\\ \hline
			\textbf{IEKF} &  0.6916 & 0.0147& 1.1411 & {1.1221}     \\ \hline
			\textbf{IJIEKF-0.01} & 0.6871 &\textbf{ 0.0146}& 1.1218 & 1.1094 \\ \hline
			\textbf{IJIEKF-0.1}  & \textbf{0.6839} & 0.0150& 1.0938 & 1.1389  \\ \hline
			\textbf{IJIEKF-0.5}  &0.7548&0.0174&0.9815&1.0621\\ \hline
			\textbf{IJIEKF-1.0}  &0.8481&0.0161&0.9275&0.5528 \\ \hline
			\textbf{IJIEKF-2.0}  &1.1619&0.0254&0.9871&0.5115 \\ \hline
		\end{tabular}
	\end{table}
	
	\begin{figure*}
		\subfigure[Trajectory.]{ 
			\centering
			\includegraphics[width=0.3\textwidth]{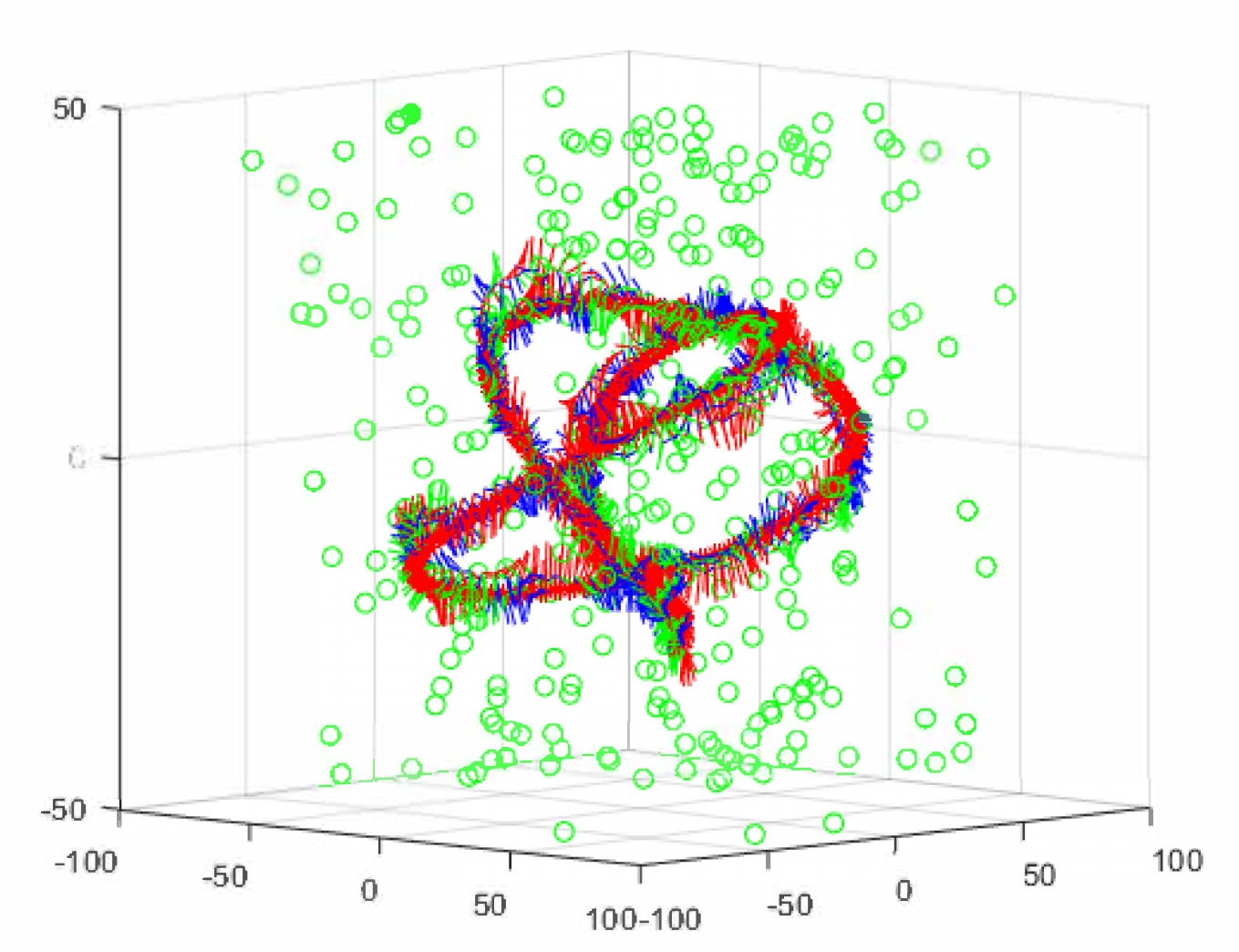} 
			\label{fig:trajectory} 
		}%
		\subfigure[Root Mean Squared Error~(RMSE).]{
			\centering
			\includegraphics[width=0.3\textwidth]{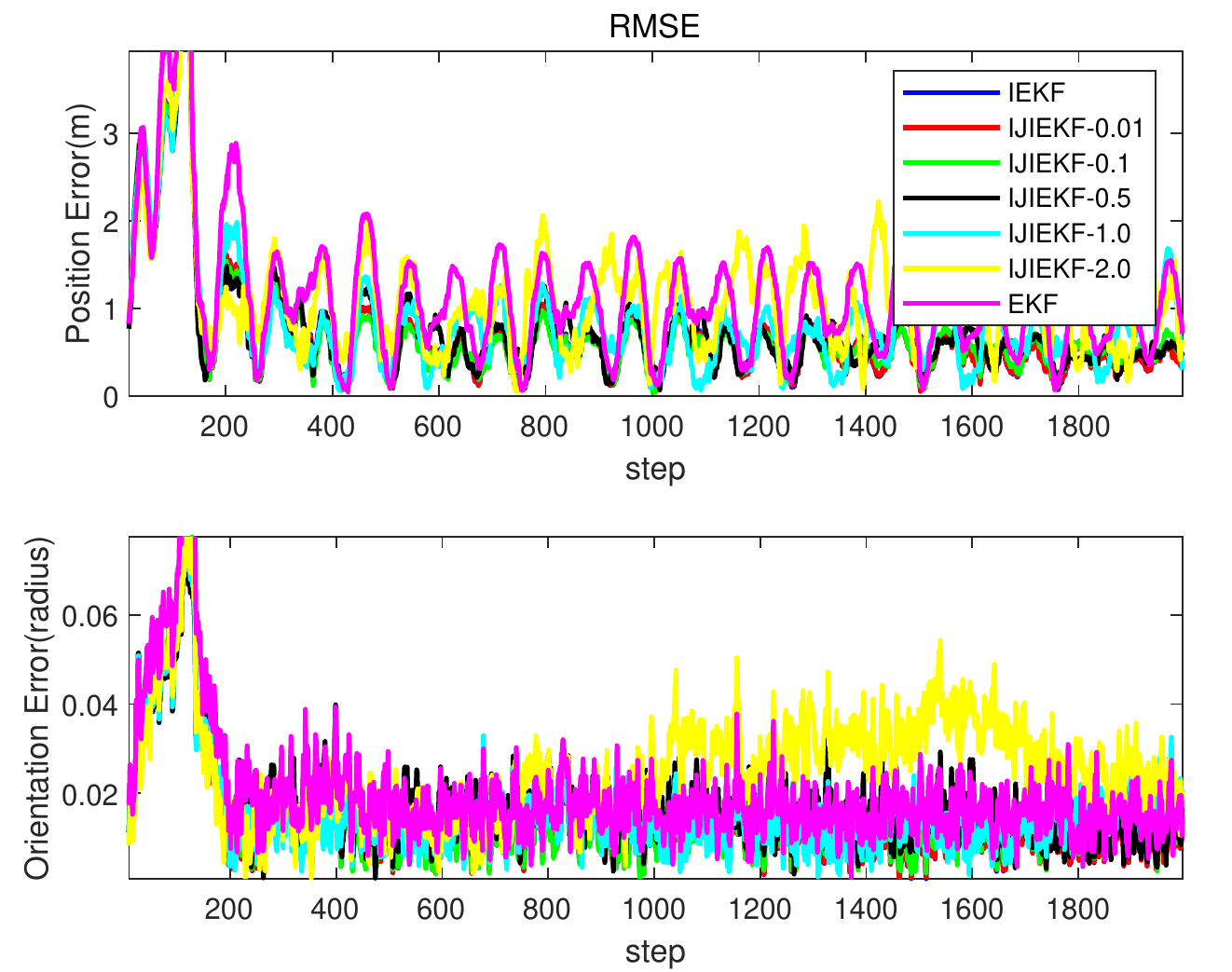}
			\label{fig:rmseSim}
		}%
		\subfigure[Normalized Estimation Error Squared.]{
			\centering
			\includegraphics[width=0.287\textwidth]{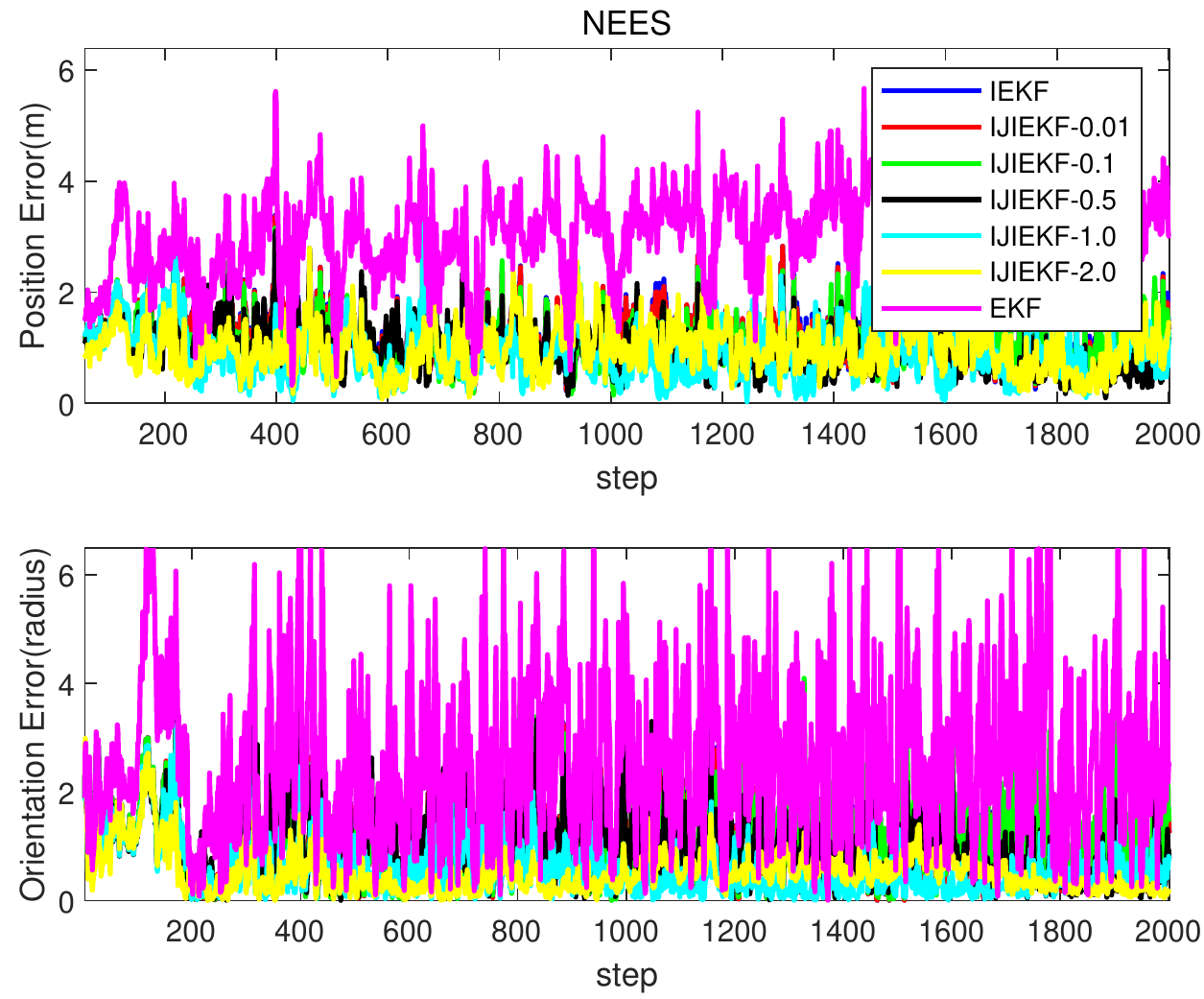}
			\label{fig:neesSim}
		}
		\caption{The simulation results for 50 Monte-Carlo runs of \textbf{EKF}, \textbf{IEKF} and \textbf{IJ-IEKF} algorithms with $\xi_\delta\sim \mathcal{U}(-r,r)$, and we set $r=0.01,0.1,0.5,1.0,2.0$. Fig.~\ref{fig:trajectory} shows the simulated trajectory with $\begin{pmatrix}
				50\cos{(0.075t)}&40\sin{(0.05t)}&20\sin{(0.05t+1)}
			\end{pmatrix}^\top$. Fig.~ \ref{fig:rmseSim} and Fig.~\ref{fig:neesSim} show RMSE and NEES of the above-mentioned algorithms at every time step during experiments.}
		\label{fig:Simulation}
	\end{figure*}	
	As shown in Fig.~\ref{fig:neesSim}, the IJ-IEKF and IEKF maintain consistency with different ranges $r$, while the EKF is not consistent. Table.~\ref{tab:Simulation_result} illustrates that imitating the invariant error to compensate the Jacobian term improves the filter performance, which is also demonstrated in the following experiments.}

\subsection{Experiments on Datasets}
We further evaluate the accuracy of our proposed algorithms by testing them on the EuRoC MAV Datasets \cite{Burri25012016}, which are collected on board a macro aerial vehicle in indoor environments. All the experiments are implemented on the OPENVINS platform~\cite{Geneva2020ICRA} with an Intel i7-11700 processor. We maintain the multi-state constraint Kalman filter~(MSCKF)~\cite{Mourikis2007} frame in the OPENVINS for its efficiency, where online calibration and loop closure modules are turned off.  			

We run this experiment with QEKF, FEJ, IEKF, and our proposed IJ-IEKF. We set the imitating orientation satisfies $\mathcal{U}(-0.5,0.5)$. The other experimental parameters settings are shown in Table~\ref{tab:Simulation_and_Experimental parameters}. We select all \textbf{MH room} sequences~(including easy, medium, and difficult mode) in the  EuRoC MAV Datasets to demonstrate the performance of the above-mentioned algorithms. We choose the relative pose error (RPE) and the absolute trajectory error (ATE) as evaluation metrics. In order to analyze the performance on a single dataset, we also choose the RMSE as another evaluation metric. The reader can refer to~\cite{Zichao2018} to find more details on the evaluation metrics. In the whole experiment, it costs less than $10$ms to perform the propagation step and the update step in all filter-based algorithms with about $20$ captured features each step.

\begin{table}[h]
	\centering
	\caption{Experiment Parameters}
	\label{tab:Simulation_and_Experimental parameters}  
	\resizebox{0.48\textwidth}{!}{
		\begin{tabular}{cccc}
			\hline\hline\noalign{\smallskip}	
			Parameters& Value& Parameters&Value\\
			\noalign{\smallskip}\hline\noalign{\smallskip}
			Gyro White Noise$({\rm rad/(s\sqrt{HZ})})$& 1.6968e-04&Gyro. Random Walk$({\rm rad/(s^2\sqrt{HZ})})$ & 1.9393e-05  \\
			Accel. White Noise$({\rm m/(s^2\sqrt{HZ})})$ & 2.0000e-3 &Accel. Random Walk $({\rm m/(s^3\sqrt{HZ})})$ & 3.0000e-3  \\
			Max. Clone Size&11&Max Feats&40\\
			\noalign{\smallskip}\hline
	\end{tabular}}
\end{table}

The boxplots in Fig.~\ref{fig:dataset} illustrate that the error distribution of IJ-IEKF is closer to $0$ and more concentrated, especially in terms of relative orientation error. Moreover, we choose a representative challenging sequence \textbf{MH\_04\_difficult} to show the difference in performance among QEKF, FEJ, IEKF, and IJ-IEKF, as shown in Fig.~\ref{fig:errors_rmse}. It illustrates that the odometry estimates from IJ-IEKF contain minimal drift to keep the estimated trajectory close to the ground truth most of the time during the experiment, especially on the orientation error at the end of the sequence. All the evaluations on the ATE metrics are summarized in Table~\ref{tab:Experimental_results}. 

\begin{figure}
	\centering
	\includegraphics[width=0.45\textwidth]{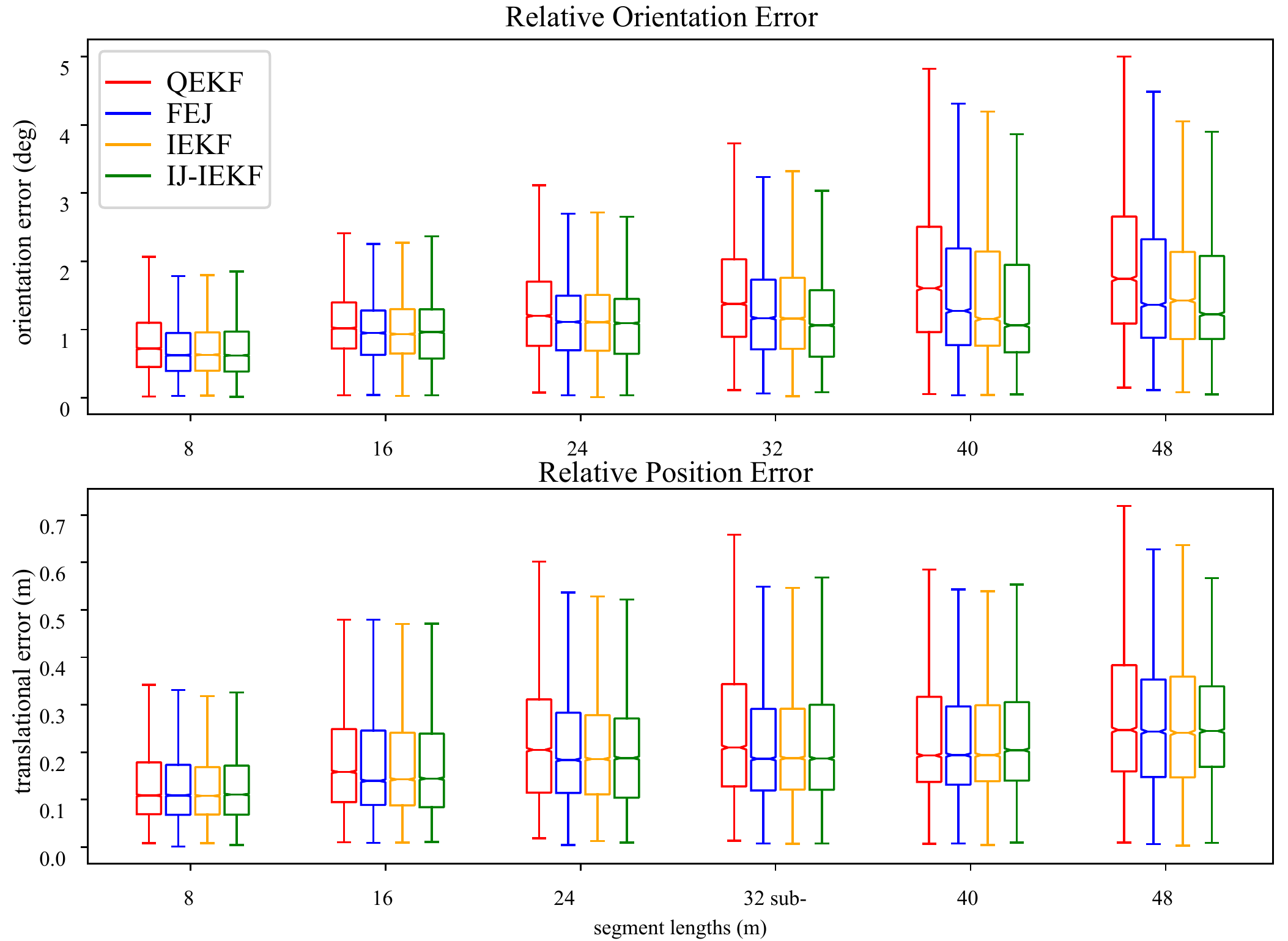}
	\caption{Boxplots of the odometric relative pose error using \textbf{QEKF}, \textbf{FEJ}, \textbf{IEKF} and \textbf{IJ-IEKF} algorithms on different segments of easy, medium, and difficult sequences in the EuRoC MAV dataset.}\label{fig:dataset}
\end{figure}
\begin{figure}
	\centering
	\includegraphics[width=0.45\textwidth]{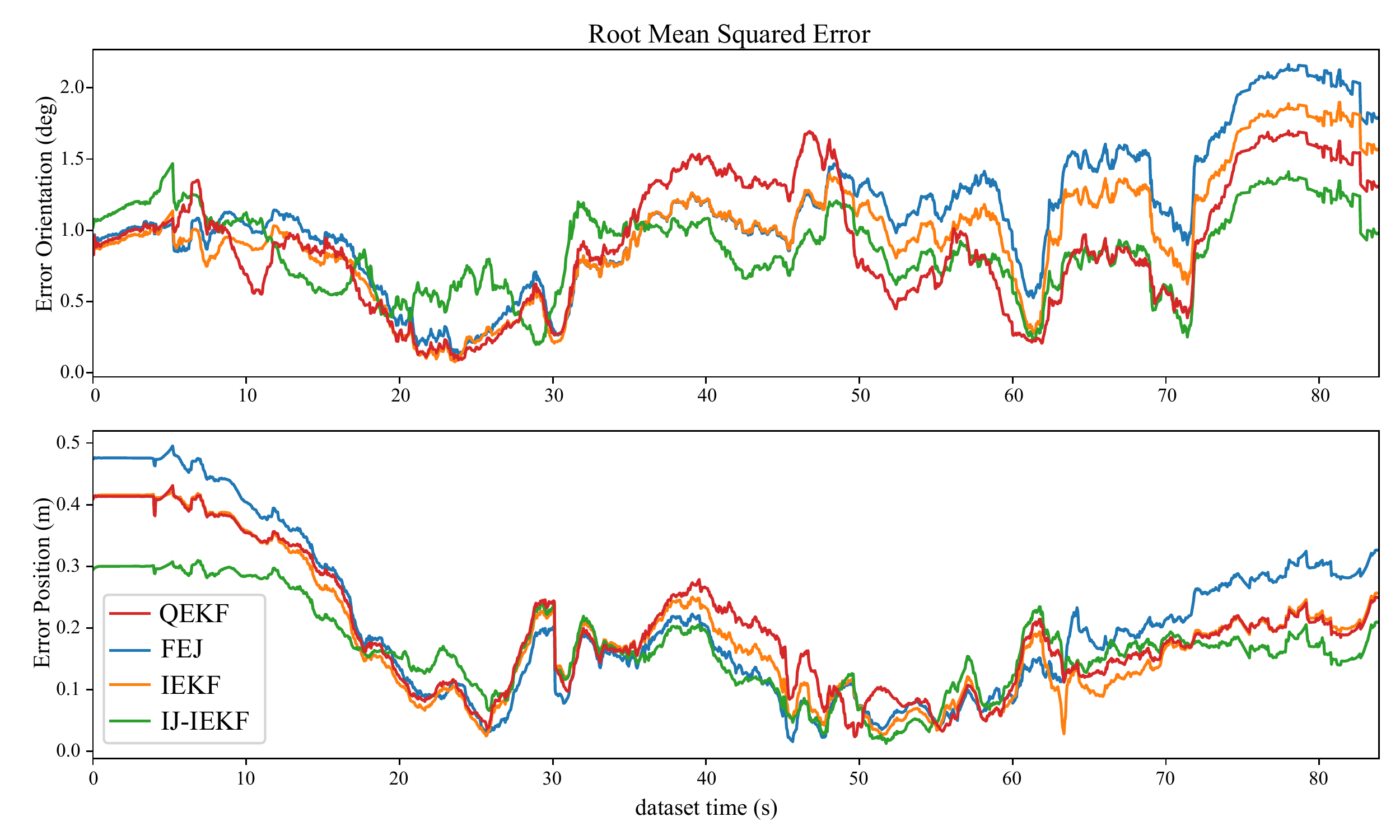}
	\caption{The root mean squared errors evolution for the \textbf{QEKF}, \textbf{FEJ}, \textbf{IEKF} and \textbf{IJ-IEKF} algorithms on the challenging sequence \textbf{MH\_04\_difficult}. }\label{fig:errors_rmse}
\end{figure}
\begin{table*}
	\centering
	\caption{\textbf{Experiments.} Absolute Trajectory Error(ATE)}
	\label{tab:Experimental_results}  
	\begin{tabular}{ccccccc}
		\hline\hline\noalign{\smallskip}	
		($^\circ$)/($m$)  & \textbf{MH\_01\_easy} & \textbf{MH\_02\_easy} & \textbf{MH\_03\_medium} & \textbf{MH\_04\_difficult} & \textbf{MH\_05\_difficult}& \textbf{Average} \\\hline
		\textbf{QEKF} & 2.415 / 0.155 & 0.940 / 0.089 & 1.737 / 0.109 & 1.003 / 0.222 & 1.228 / 0.266& 1.465 / 0.168 \\
		\textbf{FEJ} & 1.925 / 0.120 & 0.745 /\textbf{ 0.104} & 1.375 /\textbf{ 0.091 }& 1.201 / 0.245 &\textbf{ 0.877} / 0.250 & 1.225 / 0.162 \\
		\textbf{IEKF} & 1.930 / 0.120 & 0.763 / 0.098 & 1.281 / 0.096 & 1.062 / 0.215 & 0.916 / 0.250 & 1.191 / 0.156 \\
		\textbf{IJ-IEKF} & \textbf{1.890 }/ \textbf{0.113 }& \textbf{0.723 }/ 0.108 &\textbf{ 1.212 }/ 0.094 & \textbf{0.909 / 0.183} & 0.945 /\textbf{ 0.238} & \textbf{1.136 }/ \textbf{0.147} \\
		\noalign{\smallskip}\hline
	\end{tabular}
\end{table*}
\section{Conclusions and Future Work}\label{sec:conclusion and future work}
In this paper, we demonstrated how the noisy nonlinear term influences the evolution of  estimation error through theoretical analysis and experimental tests. We analyzed the invariant error on $SE_n(3)$ and derived a closed-form expression for the propagation of the Lie logarithm of the invariant error on $\mathfrak{se}_n(3)$. We applied our theoretical findings to 
the navigation model of IMU, and then the IMU model was implemented in VINS. The experimental results showed that the filter based on compensating the nonlinear terms seems likely to perform better.  The closed-form expression of uncertainty propagation for the estimation error in the presence of noise is worth further investigation for optimal filtering design.

	{\appendix[Supporting Lemmas]
	\begin{lemma}[Theorem 5 of~\cite{Hunacek2008}]\label{lemma:derivative_of_lie_group}
		For a continuously differentiable trajectory $\mathbf{X}(t)\in\mathfrak{g}$. The time-derivative of $e^{\mathbf{X}}$ has the following closed form:
		\begin{equation*}\label{eqn:diff_between_group_and_algebra}
			{\frac {\rm d}{\rm dt}}e^{\mathbf{X}(t)}=e^{\mathbf{X}(t)}{\frac {\mathbf{I}-e^{-\mathrm {ad} _{\mathbf{X}}}}{\mathrm {ad} _{\mathbf{X}}}}{\dot {\mathbf X}(t)}.
		\end{equation*}
	\end{lemma}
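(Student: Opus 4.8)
The plan is to compute $\frac{\mathrm d}{\mathrm dt}e^{\mathbf X(t)}$ by inserting a homotopy parameter into the exponential and integrating it out. First I would fix $t$, abbreviate $\mathbf X=\mathbf X(t)$ and $\dot{\mathbf X}=\tfrac{\mathrm d}{\mathrm dt}\mathbf X(t)$, and introduce the matrix-valued function
\[
  F(s):=e^{-s\mathbf X}\,\frac{\partial}{\partial t}\,e^{s\mathbf X},\qquad s\in[0,1].
\]
The regularity I need — that $(s,t)\mapsto e^{s\mathbf X(t)}$ is $C^1$ and that its mixed partials satisfy $\partial_s\partial_t=\partial_t\partial_s$ — follows by differentiating the series $e^{s\mathbf X(t)}=\sum_{n\ge0}\frac{s^n}{n!}\mathbf X(t)^n$ term by term, which is legitimate because the differentiated series converge uniformly on compact sets (and $\mathbf X$ is continuously differentiable). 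I would also record the elementary identities $\partial_s e^{s\mathbf X}=\mathbf X e^{s\mathbf X}=e^{s\mathbf X}\mathbf X$ and the fact that $\mathbf X$ commutes with $e^{\pm s\mathbf X}$.

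Next I would differentiate $F$ in $s$. Applying the product rule together with $\partial_s\partial_t e^{s\mathbf X}=\partial_t(\mathbf X e^{s\mathbf X})=\dot{\mathbf X}e^{s\mathbf X}+\mathbf X\,\partial_t e^{s\mathbf X}$, the term $-\mathbf X e^{-s\mathbf X}\partial_t e^{s\mathbf X}$ arising from differentiating the prefactor cancels the term $e^{-s\mathbf X}\mathbf X\,\partial_t e^{s\mathbf X}=\mathbf X e^{-s\mathbf X}\partial_t e^{s\mathbf X}$ (by commutativity), leaving
\[
  F'(s)=e^{-s\mathbf X}\,\dot{\mathbf X}\,e^{s\mathbf X}=\Ad_{e^{-s\mathbf X}}\dot{\mathbf X}=e^{-s\,\ad_{\mathbf X}}\dot{\mathbf X},
\]
where the last equality is the standard relation $\Ad_{\exp(\mathbf Y)}=e^{\ad_{\mathbf Y}}$ between the group and algebra adjoint representations. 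Since $F(0)=\partial_t(\mathbf I)=\mathbf 0$, integrating in $s$ from $0$ to $1$ gives
\[
  e^{-\mathbf X}\,\frac{\partial}{\partial t}e^{\mathbf X}=F(1)=\Bigl(\int_0^1 e^{-s\,\ad_{\mathbf X}}\,\mathrm{d}s\Bigr)\dot{\mathbf X}=\frac{\mathbf I-e^{-\ad_{\mathbf X}}}{\ad_{\mathbf X}}\,\dot{\mathbf X},
\]
and left-multiplying by $e^{\mathbf X}$ produces the asserted closed form.

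The step I would be most careful with — more a matter of interpretation than a genuine obstacle — is the symbol $\frac{\mathbf I-e^{-\ad_{\mathbf X}}}{\ad_{\mathbf X}}$: it is not an operator quotient but shorthand for the entire function $\sum_{k\ge0}\frac{(-1)^k}{(k+1)!}(\ad_{\mathbf X})^k$ evaluated at the bounded operator $\ad_{\mathbf X}$, and the identity $\int_0^1 e^{-s\,\ad_{\mathbf X}}\,\mathrm{d}s=\sum_{k\ge0}\frac{(-1)^k}{(k+1)!}(\ad_{\mathbf X})^k$ is obtained by integrating the exponential series term by term (again uniform convergence on $[0,1]$); in particular no invertibility of $\ad_{\mathbf X}$ is used. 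All of this takes place in a finite-dimensional matrix algebra, so it applies verbatim to $\mathfrak g=\mathfrak{se}_n(3)$, the only setting in which the lemma is invoked, and I would remark that this is precisely the computation underpinning the manipulations in the proof of Theorem~\ref{thm: log_linear}.
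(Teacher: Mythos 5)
Your proof is correct. The paper itself does not prove this lemma --- it is imported verbatim as Theorem~5 of the cited reference --- so there is no in-paper argument to compare against; what you give is the standard homotopy/Duhamel derivation, with the key computation $F'(s)=e^{-s\,\ad_{\mathbf X}}\dot{\mathbf X}$, $F(0)=\mathbf 0$, followed by integration over $s\in[0,1]$. Two points you handle that are worth keeping explicit: the interpretation of $\frac{\mathbf I-e^{-\ad_{\mathbf X}}}{\ad_{\mathbf X}}$ as the entire function $\sum_{k\ge0}\frac{(-1)^k}{(k+1)!}(\ad_{\mathbf X})^k$ (so no invertibility of $\ad_{\mathbf X}$ is needed, consistent with how the paper separately invokes Lemma~\ref{lemma:invertible matrix} only when it actually inverts this operator), and the identity $\Ad_{\exp(\mathbf Y)}=e^{\ad_{\mathbf Y}}$, which is exactly the paper's Lemma~\ref{lemma:Adjoit_representation_of_B}, so your argument slots into the paper's own toolkit without any external input.
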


	\begin{lemma}\label{lemma:Adjoit_representation_of_B}
		Given any  $\mathbf{A}, \mathbf{B}\in\mathfrak{g}$, it holds that 
		\begin{equation*}\label{eqn:Adjoit_representation_of_B}
			e^{ \mathbf{A} }\mathbf{B}e^{ -\mathbf{A}}=e^{\ad_{\mathbf{A}} }\mathbf{B}.
		\end{equation*}
	\end{lemma}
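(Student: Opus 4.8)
The plan is to prove this via a one-parameter family together with a constant-coefficient linear ODE. First I would introduce $f(t) := e^{t\mathbf{A}}\mathbf{B}e^{-t\mathbf{A}}$ for $t\in\mathbb{R}$, so that $f(0)=\mathbf{B}$ and the claimed identity is exactly the statement $f(1)=e^{\ad_{\mathbf{A}}}\mathbf{B}$. Differentiating with the product rule, and using that $\mathbf{A}$ commutes with $e^{t\mathbf{A}}$, gives $\dot f(t)=\mathbf{A}e^{t\mathbf{A}}\mathbf{B}e^{-t\mathbf{A}}-e^{t\mathbf{A}}\mathbf{B}e^{-t\mathbf{A}}\mathbf{A}=\mathbf{A}f(t)-f(t)\mathbf{A}=\ad_{\mathbf{A}}\bigl(f(t)\bigr)$.

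Next I would view $\ad_{\mathbf{A}}$ as a fixed linear endomorphism of the finite-dimensional vector space $\mathfrak{g}$ (equivalently, a constant square matrix once a basis of $\mathfrak{g}$ is fixed), so that $t\mapsto f(t)$ solves the linear ODE $\dot f=\ad_{\mathbf{A}}f$ with initial condition $f(0)=\mathbf{B}$. By existence and uniqueness for constant-coefficient linear ODEs, the unique solution is $f(t)=e^{t\ad_{\mathbf{A}}}\mathbf{B}$; evaluating at $t=1$ yields the lemma. I would add one sentence of sanity check: since $\mathbf{A}\in\mathfrak{g}$, the matrix $e^{\mathbf{A}}$ lies in the group $G$, so conjugation by $e^{\mathbf{A}}$ carries $\mathfrak{g}$ to itself, consistent with $e^{\ad_{\mathbf{A}}}$ mapping $\mathfrak{g}$ into $\mathfrak{g}$; and no convergence subtleties arise because the ambient matrix algebra and $\mathfrak{g}$ are finite-dimensional.

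As an alternative I would mention the direct power-series computation: expand $e^{\mathbf{A}}\mathbf{B}e^{-\mathbf{A}}=\sum_{j,k\ge 0}\frac{(-1)^k}{j!\,k!}\mathbf{A}^j\mathbf{B}\mathbf{A}^k$ and $e^{\ad_{\mathbf{A}}}\mathbf{B}=\sum_{n\ge 0}\frac{1}{n!}\ad_{\mathbf{A}}^n(\mathbf{B})$, then establish $\ad_{\mathbf{A}}^n(\mathbf{B})=\sum_{j+k=n}\binom{n}{j}(-1)^k\mathbf{A}^j\mathbf{B}\mathbf{A}^k$ by induction on $n$ and collect terms. I would present the ODE argument as the main proof, since it is the shortest and avoids the combinatorial bookkeeping. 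I do not expect a genuine obstacle here: this is the classical Hadamard/adjoint identity, and the only point requiring a word of justification is the term-by-term differentiation of the matrix exponential, which is standard; the write-up is essentially a matter of organizing the three lines above cleanly.
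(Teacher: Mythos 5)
Your proposal is correct and takes essentially the same route as the paper: both introduce the one-parameter family $e^{t\mathbf{A}}\mathbf{B}e^{-t\mathbf{A}}$ and differentiate to obtain $\dot f=[\mathbf{A},f]=\ad_{\mathbf{A}}(f)$. The only cosmetic difference is that you finish by uniqueness for the constant-coefficient linear ODE, whereas the paper extracts the Taylor coefficients $h_{n+1}=[\mathbf{A},h_n]$ and sums the series at $t=1$; both closings are standard and equivalent.
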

	\begin{proof}	
		First we define the following mapping: for $x\in\mathbb R$,
		\begin{equation*}
			h(x)=\exp{\left(x\mathbf{A} \right) }\mathbf{B}\exp{\left(-x\mathbf{A} \right) }.
		\end{equation*}	
		which admits a Taylor series at $x_{0}=0$ as
		$h(x)=\sum_{n=0}^{\infty}\frac{1}{n!}h_n x^{n}$,
		where $h_n$ are matrix Taylor coefficients of appropriate dimensions.  Evaluating the above series at $x=0$,  we have $h_0=\mathbf B$.

		Taking derivative of $h(x)$ with respect to $x$, it yields that
		\begin{align*}
			\frac{\rm d}{\rm dx}h(x)&=\frac{\rm d e^{x\mathbf A}}{\rm dx} \mathbf B e^{(-x\mathbf A)}+e^{(x\mathbf A)}\mathbf B     \frac{\rm d e^{(-x\mathbf A)}}{\rm dx} \\
			&=\mathbf Ae^{\left(x\mathbf{A} \right) }\mathbf{B}e^{\left(-x\mathbf{A} \right) }-e^{\left(x\mathbf{A} \right) }\mathbf{B}e^{\left(-x\mathbf{A} \right) }\mathbf A\\
			&=\left[\mathbf A, e^{\left(x\mathbf{A} \right) }\mathbf{B}e^{\left(-x\mathbf{A} \right) }\right]\\
			&= [\mathbf A, h(x)]\\
			&= \sum_{n=0}^{\infty}\frac{1}{n!}\left[\mathbf A,  h_n\right]  x^{n},
		\end{align*} 
		where the last equality holds due to the continuity of the Lie bracket. On the other hand, we know that for any $x\in\mathbb R$,
		\begin{equation*}
			\sum_{n=0}^{\infty}\frac{1}{n!}h_{n+1} x^{n}=
			\frac{\rm d}{\rm dx}h(x)=
			\sum_{n=0}^{\infty}\frac{1}{n!}\left[\mathbf A,  h_n\right]  x^{n}.
		\end{equation*}
		Evaluating the above equation at $x=0$, it leads to
		$h_1=[\mathbf A, h_0]=[\mathbf A, \mathbf B]$. Keeping taking higher-order derivatives of $h(x)$, it leads to that $h_{n+1}=
		\left[\mathbf A,  h_n\right]$ for $n=0,1,\ldots$. Finally, we conclude the proof by setting $x=1$ for 
		$h(x)$.
	\end{proof}
	
	\begin{lemma}\label{lemma:invertible matrix}
		For $\xi\in\mathfrak{g}$, $\mathbf J(\mathrm{ad}_{\xi})=\frac {1-e^{-\mathrm {ad} _{\xi}}}{\mathrm {ad} _{\xi}}$ in matrix form is invertible  when the eigenvalues
		$ \lambda_{i}$'s of $\ad_{\xi}$ satisfy $\lambda _{i}\neq 2k\pi {\mathbf i},k=\pm 1,\pm 2,\ldots$, where $\mathbf i$ is the imaginary unit with $\mathbf i^2=-1$.
	\end{lemma}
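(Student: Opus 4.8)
The plan is to recognize $\mathbf{J}(\ad_\xi)$ as the value of an entire function applied to the matrix $\ad_\xi$, and then read off its spectrum. First I would set $\phi(z):=(1-e^{-z})/z$ for $z\neq 0$ and $\phi(0):=1$; since the numerator vanishes to first order at the origin, $\phi$ extends to an entire function with power series $\phi(z)=\sum_{i=0}^{\infty}\frac{(-1)^i}{(i+1)!}z^{i}$. Substituting $z=\ad_\xi$ (a fixed square matrix once a basis of $\mathfrak{g}$ is chosen), this series converges absolutely, so $\mathbf{J}(\ad_\xi)=\phi(\ad_\xi)$ is well defined, and invertibility is equivalent to $\det\phi(\ad_\xi)\neq 0$.

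Next I would compute that determinant through the spectrum of $\ad_\xi$. Using a Schur (or Jordan) triangularization $\ad_\xi=U T U^{-1}$ over $\mathbb{C}$, with $T$ upper triangular carrying the eigenvalues $\lambda_1,\dots,\lambda_m$ (counted with algebraic multiplicity, $m=\dim\mathfrak{g}$) on its diagonal, one has $\phi(\ad_\xi)=U\,\phi(T)\,U^{-1}$, and applying a convergent power series to an upper triangular matrix again yields an upper triangular matrix whose diagonal entries are $\phi(\lambda_1),\dots,\phi(\lambda_m)$. Hence
\[
\det \mathbf{J}(\ad_\xi)=\prod_{i=1}^{m}\phi(\lambda_i),
\]
so the lemma reduces to determining when a factor $\phi(\lambda_i)$ can vanish.

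Finally I would locate the zeros of $\phi$. For $z\neq 0$ we have $\phi(z)=0$ iff $e^{-z}=1$, i.e. $z=2k\pi\mathbf{i}$ with $k\in\mathbb{Z}$; the value $k=0$ is excluded since $\phi(0)=1\neq 0$, so the zero set of $\phi$ is exactly $\{\,2k\pi\mathbf{i}:k=\pm1,\pm2,\dots\,\}$. Therefore, under the hypothesis $\lambda_i\neq 2k\pi\mathbf{i}$ for all $i$ and all $k\neq 0$, every factor in the product above is nonzero, whence $\det\mathbf{J}(\ad_\xi)\neq 0$ and $\mathbf{J}(\ad_\xi)$ is invertible.

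I expect the only delicate point to be the bookkeeping in the middle step — justifying that $\phi(\ad_\xi)$ has eigenvalues exactly $\phi(\lambda_i)$ (the spectral mapping property for an entire $\phi$), which is classical and follows cleanly from triangularization; everything else is elementary complex analysis. It is worth noting that in the application (e.g. in the proof of Theorem~\ref{thm: log_linear}) the eigenvalues of $\ad_{\xi^\wedge}$ on $\mathfrak{se}_n(3)$ are $0$ and $\pm|\omega|\mathbf{i}$, so the hypothesis specializes to $|\omega|\neq 2k\pi$, which holds automatically whenever $\xi$ lies within the injectivity radius of the logarithm.
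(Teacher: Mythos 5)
Your proof is correct and follows essentially the same strategy as the paper: identify $\mathbf{J}(\ad_\xi)$ as an entire function $\phi$ of $\ad_\xi$, show the spectrum of $\mathbf{J}(\ad_\xi)$ is $\{\phi(\lambda_i)\}$, and observe that $\phi$ vanishes only at $2k\pi\mathbf{i}$, $k\neq 0$. One difference is worth noting, and it is in your favor: the paper argues only that each eigenvector $\mathbf{x}$ of $\ad_\xi$ satisfies $\mathbf{J}(\ad_\xi)\mathbf{x}=\phi(\lambda_i)\mathbf{x}$, which shows the values $\phi(\lambda_i)$ are \emph{among} the eigenvalues of $\mathbf{J}(\ad_\xi)$ but does not by itself exclude additional (possibly zero) eigenvalues when $\ad_\xi$ is defective --- and $\ad_{\xi^\wedge}$ on $\mathfrak{se}_n(3)$ is generically non-diagonalizable (e.g.\ it is nilpotent when $\omega=0$). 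Your Schur/Jordan triangularization step supplies the full spectral mapping property and closes that gap. A minor bookkeeping point: the paper's Definition of $\mathbf{J}$ uses the series $\sum_i \frac{1}{(i+1)!}(\ad_{\mathbf{x}})^i$, which is $(e^{z}-1)/z$ rather than your $(1-e^{-z})/z$; the two functions have the same zero set, so the conclusion is unaffected, but you should make sure the sign convention you adopt matches the one used where the lemma is invoked.
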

	\begin{proof}
		The matrix $\mathbf {J}(\mathrm{ad}_{\xi})=\frac {1-e^{-\mathrm {ad} _{\xi}}}{\mathrm {ad} _{\xi}}$ can be expanded into a Taylor series as 
		$\mathbf {J}(\mathrm{ad}_{\xi})=\frac {1-e^{-\mathrm {ad} _{\xi}}}{\mathrm {ad} _{\xi}}=\sum_{n=0}^{\infty}
		\frac{1}{(n+1)!}{\ad_{\xi}}^n$.
		Since $\lambda_{i}$'s are the eigenvalues of $\ad_{\xi}$, i.e., there exists $\mathbf{x}\neq\mathbf{0}$ such that $\ad_{\xi}\mathbf{x}=\lambda_{i}\mathbf{x}$, we have that 
		\begin{equation}\label{eqn:series_eigen}
			\sum_{n=0}^{k}
			\frac{1}{(n+1)!}{\ad_{\xi}}^n \mathbf x =
			\sum_{n=0}^{k}
			\frac{1}{(n+1)!}{\lambda_i}^n \mathbf x.
		\end{equation}
		Taking limitation by letting $k\to \infty$ for the both sides of~\eqref{eqn:series_eigen}, we have $\mathbf {J}(\mathrm{ad}_{\xi})\mathbf x=\frac{1 - e^{-\lambda_{i}}}{\lambda_{i}}\mathbf{x}$, that is, the eigenvalues of $\mathbf {J}(\mathrm{ad}_{\xi})$ are $\frac{1 - e^{-\lambda_{i}}}{\lambda_{i}}$'s. When $ \lambda _{i}\neq 2k\pi \mathbf{i},k=\pm 1,\pm 2,\ldots$, $\frac{1 - e^{-\lambda_{i}}}{\lambda_{i}}\neq 0$, which completes the proof.
	\end{proof}

\begin{lemma}[Theorem 7 of \cite{Barrau}]\label{lemma:linear_invariant_error}
	Let $X\in G$ denote the state lying on the Lie group and $\xi:=\log\left( X\right)^\vee $ denote the vector of its Lie logarithm. If $\dot{X} = g_{u_t}(X)$ satisfies $g_{u_t}(X_1X_2)=X_1g_{u_t}(X_2)+g_{u_t}(X_1)X_2$, its lie logarithm $\xi$ satisfies a linear ordinary differential equation, i.e.,
	\begin{equation}
		\dot{\xi_t}=A_t\xi_t,
	\end{equation} 
	and $A_t$ can be calculated by linearing the mapping $g_{u_t}(exp(\xi_t))=A_t\xi_t+O(\left\|\xi \right\|^2 )$.
\end{lemma}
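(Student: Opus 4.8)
The plan is to show that the derivation-type hypothesis $g_{u_t}(X_1X_2)=X_1g_{u_t}(X_2)+g_{u_t}(X_1)X_2$ forces the time-$t$ flow of $\dot X=g_{u_t}(X)$ to be a \emph{group automorphism} of $G$, and then to transport this structure to the Lie algebra $\mathfrak g$ through $\exp$, where an automorphism necessarily acts by a linear map.

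First I would record the immediate consequence $g_{u_t}(\mathbf I)=\mathbf 0$ (put $X_1=X,\ X_2=\mathbf I$ in the hypothesis), so that $\mathbf I$ is an equilibrium and the map $\xi\mapsto g_{u_t}(\exp(\xi^\wedge))^\vee$ has no zeroth-order term at $\xi=0$; this is precisely what makes its Jacobian $A_t$ at $\xi=0$ well defined. Writing $\Psi_t$ for the flow from time $0$ — so $\Psi_0=\mathrm{id}$ and $t\mapsto\Psi_t(X_0)$ is the solution started at $X_0$ — I would take two solutions $Y_1(t)=\Psi_t(X_1)$ and $Y_2(t)=\Psi_t(X_2)$ and compute that their product $W(t)=Y_1(t)Y_2(t)$ obeys $\dot W=\dot Y_1Y_2+Y_1\dot Y_2=g_{u_t}(Y_1)Y_2+Y_1g_{u_t}(Y_2)=g_{u_t}(Y_1Y_2)=g_{u_t}(W)$ with $W(0)=X_1X_2$. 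Uniqueness of solutions then gives $\Psi_t(X_1X_2)=\Psi_t(X_1)\Psi_t(X_2)$, i.e. each $\Psi_t$ is a Lie-group automorphism; I would denote by $\Phi(t):=(\mathrm d\Psi_t)_{\mathbf I}$ its differential at the identity, a linear isomorphism of $\mathfrak g$ (a matrix in coordinates), smooth in $t$ with $\Phi(0)=\mathbf I$.

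Next I would use the classical fact that a Lie-group homomorphism intertwines the exponential maps, $\Psi_t\circ\exp=\exp\circ\,\Phi(t)$, i.e. $\Psi_t\big(\exp(\xi^\wedge)\big)=\exp\big((\Phi(t)\xi)^\wedge\big)$. Evaluating along the actual trajectory $X(t)=\Psi_t(X(0))$ and applying $\log(\cdot)^\vee$ — legitimate while $X(t)$ remains within the injectivity radius of $\exp$, exactly as in the proof of Theorem~\ref{thm: log_linear} — yields $\xi(t)=\Phi(t)\xi(0)$, hence $\dot\xi(t)=\dot\Phi(t)\Phi(t)^{-1}\xi(t)=:A_t\xi(t)$, a linear ODE whose matrix $A_t=\dot\Phi(t)\Phi(t)^{-1}$ depends only on the vector field, not on the trajectory. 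To identify this $A_t$ with the stated linearization, I would combine Lemma~\ref{lemma:derivative_of_lie_group}, which gives $\dot\xi=\mathbf J(-\ad_{\xi^\wedge})^{-1}\big(\exp(-\xi^\wedge)g_{u_t}(\exp(\xi^\wedge))\big)^\vee$, with $\mathbf J(\mathbf 0)^{-1}=\mathbf I$ and $g_{u_t}(\exp(\xi^\wedge))=O(\|\xi\|)$ to obtain $\dot\xi=g_{u_t}(\exp(\xi^\wedge))^\vee+O(\|\xi\|^2)$; matching this against $\dot\xi=A_t\xi$ gives $g_{u_t}(\exp(\xi^\wedge))^\vee=A_t\xi+O(\|\xi\|^2)$.

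The part that needs care, more than cleverness, is the soft analysis underlying the key steps: checking that $\Psi_t$ is a genuine, everywhere-defined smooth family of diffeomorphisms (existence, uniqueness and completeness for the time-varying ODE, plus smooth dependence of $\Phi(t)$ on $t$ so that $A_t$ is meaningful) and keeping $\log$ single-valued by restricting to a neighborhood of the identity. The automorphism property of $\Psi_t$ and the intertwining relation $\Psi_t\circ\exp=\exp\circ\,\Phi(t)$ are the two pivotal facts; the rest is bookkeeping.
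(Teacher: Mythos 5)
The paper never proves Lemma~\ref{lemma:linear_invariant_error}: it is imported verbatim as Theorem~7 of \cite{Barrau}, so there is no in-paper argument to compare yours against, and supplying a self-contained proof is genuinely useful. Your argument is correct and is, in substance, the standard one: the derivation identity forces $g_{u_t}(\mathbf I)=\mathbf 0$, the product rule plus uniqueness of solutions forces the flow $\Psi_t$ to be a group automorphism, and an automorphism intertwines $\exp$ with its differential $\Phi(t)$ at the identity, whence $\xi(t)=\Phi(t)\xi(0)$ and $\dot\xi=\dot\Phi(t)\Phi(t)^{-1}\xi$ is linear with a matrix depending only on the vector field. Your identification of $A_t=\dot\Phi\Phi^{-1}$ with the stated linearization --- running Lemma~\ref{lemma:derivative_of_lie_group} exactly as in the proof of Theorem~\ref{thm: log_linear} and using $\mathbf J(\mathbf 0)^{-1}=\mathbf I$ together with $g_{u_t}(\exp(\xi^\wedge))=O(\|\xi\|)$ --- is also sound, and it quietly repairs a looseness in the lemma's statement: $g_{u_t}(\exp(\xi^\wedge))$ lives in the tangent space at $\exp(\xi^\wedge)$, not in $\mathfrak g$, and it is your left-trivialized quantity $\bigl(\exp(-\xi^\wedge)\,g_{u_t}(\exp(\xi^\wedge))\bigr)^\vee$ that genuinely admits the expansion $A_t\xi+O(\|\xi\|^2)$, the two agreeing to first order precisely because $g_{u_t}$ vanishes at the identity. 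Importantly, your proof does not invoke Theorem~\ref{thm: log_linear} (whose proof itself cites this lemma), so no circularity is introduced. The only caveats are the ones you already flag --- local existence, uniqueness and smooth time-dependence of the flow, and restriction to the injectivity radius of $\exp$ --- all routine for the matrix groups considered here.
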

	}
{
	{\appendix[Matrix Lie groups useful formulas]
\begin{definition}[Jacobian of $SE_n(3)$]\label{def:Jacobians}
	For any Lie algebra $\mathbf{x}\in\mathfrak{se}_n(3)$, the left Jocabians of $\mathbf{x}$ is defined as 
	\begin{equation}
		\mathbf{J}(\ad_\mathbf{x})=\sum_{i=0}^{\infty}\frac{1}{(i+1)!}(\ad_{\mathbf{x}})^{i},
	\end{equation} 
	which is called the (left) Jacobian of $SE_n(3)$\footnote{More details about the left and right Jocobians of $
		SO(3)$ and $SE(3)$ are provided in Section 7.1.5 of~\cite{barfoot_2017}.}.
\end{definition}
Note that using the paremetrization $\mathbf{T}=\exp{(\xi^\wedge)}$ and the pertubation $\mathbf{T}'=\exp{((\xi+\delta\xi)^\wedge)}$, the logarithm of the difference~(relative to $\mathbf{T}$) can be approximated as  $\log{(\mathbf{T}'\mathbf{T}^{-1})^\vee}\approx\mathbf{J}(\ad_{\xi^\wedge})\delta \xi$ given that $\delta \xi$ is sufficiently small.
When an element of the Lie algebra $\mathfrak{se}_n(3)$ is parameterized as $\mathbf{x}^\vee:=\begin{pmatrix}
	\mathbf{\theta}^\top& v_1^\top& \cdots& v_n^\top
\end{pmatrix}^\top $ where $\theta^\wedge\in \mathfrak{so}(3)$ and $v_i\in\mathbb{R}^3$, 
the left Jacobian of $SE_n(3)$ writes
$
\mathbf{J}(\ad_{\mathbf{x}})=\begin{pmatrix}
	\mathbf{J}(\theta^\wedge)&\mathbf{0}&\cdots&\mathbf{0}\\
	\mathbf{Q}_\theta({v}_1)&\mathbf{J}(\theta^\wedge)&\cdots&\vdots\\
	\vdots&\mathbf{0}&\ddots&\mathbf{0}\\
	\mathbf{Q}_\theta(\mathbf{v}_n)&\mathbf{0}&\cdots&\mathbf{J}(\theta^\wedge)
\end{pmatrix}
$
where 
$\mathbf{J}(\theta^\wedge)=\sum_{i=0}^{\infty}\frac{1}{(i+1)!}(\theta^\wedge)^{i}=\frac{\sin\left| \theta\right|}{\left| \theta\right|}\mathbf{I}+(1-\frac{\sin\left| \theta\right|}{\left| \theta\right|})\frac{\theta\theta^\top}{\left| \theta\right|^2}+\frac{1-\cos\left| \theta\right|}{\left| \theta\right|^2}\theta^\wedge$ and $\mathbf{Q}_\theta(v_i)=\sum_{n=0}^{\infty}\sum_{m=0}^{\infty}\frac{1}{(n+m+2)!}(\theta^\wedge)^nv_i^\wedge(\theta^\wedge)^m$.
}
}	
	\bibliographystyle{unsrt}
	\bibliography{ref}
	%
	%
	%

\end{document}